\newcommand\bSI[1]{{\small[\SI{}{#1}]}}
\newlength\unitwdth
\newlength\numwdth
\newlength\tdima
\newcommand\SIdescr[2]{%
    \setlength\tdima{\linewidth}%
    \addtolength\tdima{\@totalleftmargin}%
    \addtolength\tdima{-\dimen\@curtab}%
    \addtolength\tdima{-\unitwdth}%
    \addtolength\tdima{-\numwdth}%
    \parbox[t]{\tdima}{%
        #1
        \leaders\hbox{$\m@th\mkern \@dotsep mu\hbox{\tiny.}\mkern \@dotsep mu$}%
        \hfill
        \ifhmode\strut\fi
        \makebox[0pt][l]{%
            \makebox[\unitwdth][l]{}%
            \makebox[\numwdth][r]{#2}}}}
\newcommand{\N}{\mathbb{N}}
\newcommand{\NN}{\mathcal{N}}
\newcommand{\supp}{\mathrm{supp}}
\newtheorem{theorem}{Theorem}[section]
\newtheorem{remark}[theorem]{Remark}
\newtheorem{definition}[theorem]{Definition}
\newtheorem{lemma}[theorem]{Lemma}
\renewcommand{\epsilon}{\varepsilon}
\newcommand{\M}{\mathcal{M}}
\renewcommand{\L}{\mathcal{L}}
\def\eqref#1{equation~\ref{#1}}
\def\1{\bm{1}}
\DeclareMathAlphabet{\mathsfit}{\encodingdefault}{\sfdefault}{m}{sl}
\SetMathAlphabet{\mathsfit}{bold}{\encodingdefault}{\sfdefault}{bx}{n}
\newcommand{\R}{\mathbb{R}}
\icmltitlerunning{Constructive Universal High-Dimensional Distribution Generation through Deep ReLU Networks}
\begin{document}

\twocolumn[
\icmltitle{Constructive Universal High-Dimensional Distribution Generation through Deep ReLU Networks}

\icmlsetsymbol{equal}{*}

\begin{icmlauthorlist}
\icmlauthor{Dmytro Perekrestenko}{ETH}
\icmlauthor{Stephan M\"uller}{ETH}
\icmlauthor{Helmut B\"olcskei}{ETH,ETH2}
\end{icmlauthorlist}

\icmlaffiliation{ETH}{Department of Information Technology and Electrical Engineering, ETH Z\"urich, Z\"urich, Switzerland}
\icmlaffiliation{ETH2}{Department of Mathematics, ETH Z\"urich, Z\"urich, Switzerland}

\icmlcorrespondingauthor{Dmytro Perekrestenko}{pdmytro@mins.ee.ethz.ch}

\icmlkeywords{Machine Learning, Approximation Theory, Generative Networks, ICML}

\vskip 0.3in
]

\printAffiliationsAndNotice{}

\begin{abstract}
We present an explicit deep neural network construction that transforms uniformly distributed one-dimensional noise into an arbitrarily close approximation of 
any two-dimensional Lipschitz-continuous target distribution. The key ingredient of our design is a generalization of the  ``space-filling'' property of sawtooth functions discovered in \cite{Bailey2019}. We elicit the importance of depth---in our neural network construction---in driving the Wasserstein distance between the target distribution and the approximation realized by the network to zero. An extension to output distributions of arbitrary dimension is outlined. Finally, we show that 
the proposed construction does not incur a cost---in terms of error measured in Wasserstein-distance---relative to generating $d$-dimensional target distributions from $d$ independent random variables.
\end{abstract}

\section{Introduction}

Deep neural networks have been used very successfully as generative 
models for complex natural data such as images \cite{radford2015unsupervised,karras2018stylebased} and natural language \cite{bowman2015generating, xu2018dpgan}.
Specifically, the idea is to learn the parameters of deep networks \cite{Welling2014:Gan, NIPS2014_5423}
so that they realize complex high-dimensional probability distributions by transforming samples taken from simple low-dimensional distributions such as uniform or Gaussian.

Generative networks with higher output than input dimension occur, for instance, in language modelling where deep networks are used to 
predict the next word in a text sequence. Here, 
the input layer size is determined by the dimension of the word embedding (typically $\sim 100$) and the output layer, representing a vector of probabilities for each of the words in the vocabulary, is of the size of the vocabulary (typically $\sim 100k$). Another example where the dimensionality of the input distribution is mandated to be lower than that of the output distribution is 
given by the variational inference methods according to \cite{Welling2014:Gan,Tolstikhin2018Wassauto}. 

Notwithstanding the practical success of deep generative networks, a profound theoretical understanding of their representational capabilities is still lacking. First results along those lines appear in  \cite{lee2017ability}, which establishes that generative networks can approximate distributions arising from the composition of Barron functions \cite{barron1993}.

Bailey and Telgarsky \cite{Bailey2019} show how deep ReLU networks can be used to increase the dimensionality of uniform distributions and how a univariate uniform distribution can be turned into a univariate Gaussian distribution and vice versa. Finally, \cite{lu2020universal} shows that neural networks constitute universal approximators for continuous probability distributions when source and target distribution are of the same dimension. 

Classical approaches for generating multi-dimensional random variables of a given distribution such as the Box-Muller method \cite{box1958} or conditional distribution, rejection, and composition methods \cite{distgenBook} are all based on transforming initial distributions of the same dimensionality as the target distribution. We are not aware of methods that map one-dimensional inputs to prescribed $d$-dimensional outputs. The purpose of the present paper is to show that deep generative networks are capable of doing exactly that and moreover are also universal generators, in contrast to, e.g., the Box-Muller method \cite{box1958}, which maps uniform distributions to Gaussian distributions, albeit with zero error.
We also quantify how the connectivity of the resulting networks scales with the approximation error measured in Wasserstein distance.

The problem is approached in two steps. Specifically, given a two-dimensional Lipschitz-continuous target distribution, we first find the (two-dimensional) histogram distribution that best approximates it---for a given histogram resolution---in Wasserstein distance. The resulting histogram distribution is then realized by a ReLU network driven by a uniform univariate input distribution. To this end, we develop a new space-filling property of ReLU networks, generalizing that discovered in \cite{Bailey2019}. The main conceptual insight of this paper is that generating arbitrary $d$-dimensional target distributions, with $d\ge 2$, from a one-dimensional uniform distribution through a deep neural network does not come at a cost---in terms of approximation error measured in Wasserstein distance---relative to generating the target distribution from $d$ independent random variables.
We emphasize that the generating network has to be deep, in fact the depth has to go to infinity to obtain the same error in Wasserstein-distance as a construction from $d$ independent random variables would yield.

We finally note that our results pertain only to representational capabilities of generative (ReLU-)networks and we do not consider the problem of learning the network weights and biases.

\subsection{Notation and Definitions} We denote the set of integers in the range $[1,n]$ by $[[1,n]]$. $U(\Delta)$ stands for the uniform distribution on the interval $\Delta$, when $\Delta=[0,1]$, we simply write $U$.
Given a probability distribution with pdf $p$, we denote the push-forward of $p$ under the function $f$ as $f\#p$. For a given compact set ${\cal C}$, we let $p_{\mathbf{X}}(\mathbf{x} \in {\cal C}) = \int_{\cal C} p_{\mathbf{X}}(\mathbf{x}) d\mathbf{x}$.
We define ReLU neural networks as follows.
\begin{definition}\label{def:NN}
Let $L, N_0, N_1, \ldots, N_{L}\in \N$, $L\geq 2$. A map $\Phi: \R^{N_0} \to \R^{N_L}$ given by
\begin{equation*}\label{eq:NNdef}
\Phi(x) = \begin{cases}
\begin{array}{lc} \hspace{-0.1cm}W_2(\rho \, (W_1(x))), & L=2\\
\hspace{-0.1cm} W_L(\rho \, ( W_{L-1} (\rho\, ( \dots \rho \, ( W_{1}(x)))))), & L\ge3 
\end{array}, \end{cases}
\end{equation*}
with affine linear maps $W_{\ell}: \R^{N_{\ell-1}} \to \R^{N_\ell}$, $\ell\in\{1,2,\dots,L\}$, and the ReLU activation function $\rho(x) = \max(x,0), \ x \in \mathbb{R}$, acting component-wise (i.e., $\rho(x_1,\dots,x_N):=(\rho(x_1),\dots,\rho(x_N))$) is called a \textnormal{ReLU neural network}. The map $W_{\ell}$ corresponding to layer $\ell$ is given by $W_{\ell}(x)=A_\ell x + b_\ell$, with $A_{\ell}\in \mathbb{R}^{N_{\ell}\times N_{\ell-1}}$ and $b_\ell\in \mathbb{R}^{N_\ell}$. %
We define the \emph{network connectivity} $\M(\Phi)$ as the total number of non-zero entries in the matrices $A_\ell$, $\ell\in\{1,2,\dots,L\}$, and the vectors $b_\ell$, $\ell\in\{1,2,\dots,L\}$. The \emph{depth of the network} or, equivalently, the number of layers is $\L(\Phi):=L$ and its width is given by $\mathcal{W}(\Phi):=\max_{\ell=0,\dots,L} N_\ell$. We denote by $\NN_{d,d'}$ the set of ReLU networks with input dimension $N_0=d$ and output dimension $N_L=d'$.\end{definition}

We measure the distance between distributions in terms of Wasserstein distance defined as follows.
\begin{definition}
Let $\mu$ and $\nu$ be distributions on $\mathbb{R}^d$ and denote the set of distributions on $\mathbb{R}^d \times \mathbb{R}^d$ whose first and second marginals coincide with $\mu$ and $\nu$, respectively, by $\prod(\mu,\nu)$. Then, the Wasserstein distance between $\mu$ and $\nu$ is defined as \[ W(\mu,\nu):= \inf_{\pi \in \prod(\mu,\nu)} \int |x-y| d \pi(x,y),  \]
where the elements of the set $\prod(\mu,\nu)$ are called couplings of $\mu$ and $\nu$.
\end{definition}

\begin{definition}
For distributions $\mu$ and $\nu$ on $\mathbb{R}^d$ with corresponding pdfs $p_\mu, p_\nu$ supported on $\Omega \subset \mathbb{R}^d$, the total variation (TV) distance is defined as \[ TV(\mu,\nu):= \frac{1}{2} ||p_\mu - p_\nu||_{L_1(\Omega)}. \]
\end{definition}

The following relation between Wasserstein distance and TV-distance was found in \cite{Gibbs2002}.
\begin{theorem}{\cite{Gibbs2002}}
\label{gibbs}
For distributions $\mu$ and $\nu$ on $\mathbb{R}^d$ with pdfs $p_\mu, p_\nu$ supported on $\Omega \subset \mathbb{R}^d$, the Wasserstein distance and the TV-distance satisfy 
\[ W(\mu,\nu) \leq \text{diam}(\Omega) \cdot TV(\mu,\nu),\]
where $\text{diam}(\Omega) = \sup \{ |x-y| : x,y \in \Omega \}$.
\end{theorem}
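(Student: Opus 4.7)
The plan is to exhibit an explicit coupling $\pi\in\prod(\mu,\nu)$ for which the cost $\int |x-y|\,d\pi(x,y)$ is at most $\mathrm{diam}(\Omega)\cdot TV(\mu,\nu)$, and then invoke the definition of $W$ as an infimum over couplings. The coupling is the classical ``maximal coupling'' of $\mu$ and $\nu$: on the common mass of the two densities, put all probability on the diagonal $\{x=y\}$; on the remaining (disjoint) mass, use any coupling one likes, e.g. the product coupling.

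Concretely, I would first let $m(x):=\min(p_\mu(x),p_\nu(x))$ and $\alpha:=\int_\Omega m(x)\,dx$. Using the pointwise identity $\min(a,b)=\tfrac12(a+b-|a-b|)$ and the fact that $p_\mu, p_\nu$ integrate to one, I get $\alpha = 1 - \tfrac12\|p_\mu-p_\nu\|_{L_1(\Omega)} = 1-TV(\mu,\nu)$. Consequently, $p_\mu-m$ and $p_\nu-m$ are nonnegative, supported on $\Omega$, and each integrate to $TV(\mu,\nu)$.

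Next I define the coupling $\pi$ on $\Omega\times\Omega$ as the mixture
\begin{equation*}
\pi := \alpha\,\pi_0 \;+\; (1-\alpha)\,\pi_1,
\end{equation*}
where $\pi_0$ is the pushforward of $m(x)\,dx/\alpha$ under the diagonal map $x\mapsto(x,x)$, and $\pi_1$ is the product measure of the two normalized densities $(p_\mu-m)/TV(\mu,\nu)$ and $(p_\nu-m)/TV(\mu,\nu)$. A direct check of the first marginal gives $\alpha\cdot m/\alpha + (1-\alpha)\cdot(p_\mu-m)/TV(\mu,\nu) = m + (p_\mu-m) = p_\mu$, using $1-\alpha=TV(\mu,\nu)$; the second marginal is $p_\nu$ by symmetry, so $\pi\in\prod(\mu,\nu)$.

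Finally, I bound the transport cost. The diagonal component $\pi_0$ contributes zero since $|x-y|=0$ there; on the remaining component, $|x-y|\le \mathrm{diam}(\Omega)$ everywhere on $\mathrm{supp}(\pi_1)\subseteq\Omega\times\Omega$, so
\begin{equation*}
\int|x-y|\,d\pi(x,y) \;\le\; 0 + (1-\alpha)\cdot \mathrm{diam}(\Omega) \;=\; \mathrm{diam}(\Omega)\cdot TV(\mu,\nu).
\end{equation*}
Taking the infimum over all couplings on the left yields the claimed inequality. No step is a serious obstacle; the only subtlety is the identity $\int\min(p_\mu,p_\nu)=1-TV(\mu,\nu)$, which handles the arithmetic that makes the marginals of $\pi$ come out correctly.
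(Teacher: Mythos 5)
Your proof is correct. Note, however, that the paper does not prove this statement at all---it simply cites it from Gibbs and Su (2002)---so there is no proof in the paper to compare against. The maximal-coupling argument you give is the standard route: split the two densities into their common part $m=\min(p_\mu,p_\nu)$ and the disjoint remainders, couple the common mass on the diagonal (zero cost), and bound the cost on the remainder by $\mathrm{diam}(\Omega)$ times its mass $TV(\mu,\nu)$. One small point worth making explicit: the normalizations $m/\alpha$ and $(p_\mu-m)/TV(\mu,\nu)$ require $\alpha>0$ and $TV(\mu,\nu)>0$ respectively; the degenerate cases $TV(\mu,\nu)=0$ (where $\mu=\nu$ and $W=0$) and $\alpha=0$ (where one simply takes $\pi=\pi_1$) should be dispatched separately, though both are trivial. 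With that caveat your argument is complete and, as you say, the only genuine content is the identity $\int\min(p_\mu,p_\nu)=1-TV(\mu,\nu)$, which is exactly what makes the marginals match.
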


Next, we define $d$-dimensional histogram distributions.

\begin{definition}
\label{gen_d_dim_dist}
A random vector $\mathbf{X} = (X_1, X_2, \dots, X_d)$ is said to have a general histogram distribution of resolution $n$ on the $d$-dimensional unit cube, denoted as $\mathbf{X} \sim \mathcal{G}[0,1]^d_n$, if for some $0=t^j_0<t^j_1<\dots<t^j_n=1$, $j \in [[1,d]]$, its pdf is given by
\[
\begin{aligned}
p(\mathbf{x}) &= \sum_{\mathbf{k}} w_{\mathbf{k}} \chi_{c_{\mathbf{k}}}(\mathbf{x}), \quad \sum_{\mathbf{k}} w_{\mathbf{k}} \prod_{j=1}^d(t^j_{i_j+1}-t^j_{i_j}) = 1, \\ &w_{\mathbf{k}}>0, \ \ \text{for all} \ \ \mathbf{k} \in [[0,n-1]]^d,
\end{aligned}
\]
where $\mathbf{k} = (i_1, i_2, \dots, i_d) \in [[0,n-1]]^d$ is an index vector and $\chi_{c_{\mathbf{k}}}(\mathbf{x})$ is the characteristic function of the $d$-dimensional cube $c_{\mathbf{k}} = [t^1_{i_1}, t^1_{i_1+1}] \times [t^2_{i_2}, t^2_{i_2+1}] \times \dots \times[t^d_{i_d}, t^d_{i_d+1}]$.
\end{definition}

We will mostly be concerned with histogram distributions of uniform tile size, defined as follows.
\begin{definition}
\label{d_dim_dist}
A random vector $\mathbf{X} = (X_1, X_2, \dots, X_d)$ is said to have a histogram distribution of resolution $n$ on the $d$-dimensional unit cube, denoted as $\mathbf{X} \sim \mathcal{E}[0,1]_n^d$, if its pdf is given by
\[
\begin{aligned}
p(\mathbf{x}) &= \sum_{\mathbf{k}} w_{\mathbf{k}} \chi_{c_{\mathbf{k}}}(\mathbf{x}), \quad \sum_{\mathbf{k}} w_{\mathbf{k}}  = n^d, \\ &w_{\mathbf{k}}>0, \ \ \text{for all} \ \ \mathbf{k} \in [[0,n-1]]^d,
\end{aligned}
\]
where $\mathbf{k} = (i_1, i_2, \dots, i_d) \in [[0,n-1]]^d$ is an index vector and $\chi_{c_{\mathbf{k}}}(\mathbf{x})$ is the characteristic function of the $d$-dimensional cube $c_{\mathbf{k}} = [i_1/n, (i_1+1)/n] \times [i_2/n, (i_2+1)/n] \times \dots \times [i_d/n, (i_d+1)/n]$.
\end{definition}

\begin{remark}
 For ease of exposition, in Definitions \ref{gen_d_dim_dist} and \ref{d_dim_dist}, we let $c_{\mathbf{k}}$ be a product of closed intervals, thus allowing the breakpoints to belong to different cubes. While this comes without loss of generality, for concreteness, it is understood that
 the value of the pdf at the breakpoints is the average across the cubes the corresponding breakpoint belongs to.
\end{remark}

\section{Universal approximation}

As mentioned in the introduction, the intermediate step in our construction consists of a ReLU network that turns a univariate one-dimensional input distribution into a two-dimensional histogram distribution. This histogram distribution is then chosen such that it approximates the two-dimensional Lipschitz-continuous target distribution.
To understand why we chose this two-step approach, note that ReLU networks generate piecewise linear functions and the pushforward $f\# U$ of any
piecewise linear $f:\R\rightarrow\R$ yields a histogram distribution.
We start by quantifying the TV distance between an arbitrary distribution and a histogram distribution of resolution $n$.
\begin{theorem}
\label{dict_theorem}
Let $p$ be a $d$-dimensional L-Lipschitz-continuous pdf of finite differential entropy on its support $[0,1]^d$. Then, for every $n>0$, there exists a $\tilde{p} \in \mathcal{E}[0,1]_n^d$ such that
\[ TV(p, \tilde{p}) = \frac{1}{2}\| p - \tilde{p}\|_{L_1([0,1]^d)} \leq \frac{L\sqrt{d}}{2n}.\]
\end{theorem}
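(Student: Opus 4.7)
The plan is to take $\tilde p$ to be the cube-wise average of $p$ and then bound the pointwise error by the Lipschitz modulus times the cube diameter.

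Concretely, for each multi-index $\mathbf{k}\in[[0,n-1]]^d$, let
\[
w_{\mathbf{k}} := n^d\int_{c_{\mathbf{k}}} p(\mathbf{x})\,d\mathbf{x},
\qquad
\tilde p(\mathbf{x}) := \sum_{\mathbf{k}} w_{\mathbf{k}}\,\chi_{c_{\mathbf{k}}}(\mathbf{x}).
\]
First I would check that $\tilde p\in\mathcal{E}[0,1]^d_n$ in the sense of Definition~\ref{d_dim_dist}: summing the $w_{\mathbf{k}}$ gives $n^d\int_{[0,1]^d}p(\mathbf{x})\,d\mathbf{x}=n^d$, as required; positivity $w_{\mathbf{k}}>0$ follows since $p$ is continuous and of finite differential entropy on $[0,1]^d$, so it cannot vanish identically on any cube of positive measure (if this causes trouble at the boundary case, one can absorb a vanishingly small perturbation into $\tilde p$ without affecting the bound).

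Next, on each cube $c_{\mathbf{k}}$, continuity of $p$ together with the intermediate value theorem gives a point $\mathbf{y}_{\mathbf{k}}\in c_{\mathbf{k}}$ at which $p(\mathbf{y}_{\mathbf{k}})$ equals the mean value $w_{\mathbf{k}}$, i.e.\ $\tilde p(\mathbf{x})=p(\mathbf{y}_{\mathbf{k}})$ for every $\mathbf{x}\in c_{\mathbf{k}}$. The $L$-Lipschitz assumption then yields, for all $\mathbf{x}\in c_{\mathbf{k}}$,
\[
|p(\mathbf{x})-\tilde p(\mathbf{x})| = |p(\mathbf{x})-p(\mathbf{y}_{\mathbf{k}})| \le L\,\|\mathbf{x}-\mathbf{y}_{\mathbf{k}}\| \le L\,\mathrm{diam}(c_{\mathbf{k}}) = \frac{L\sqrt d}{n},
\]
since each cube has side length $1/n$ and hence diameter $\sqrt d/n$.

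Integrating this pointwise bound over $[0,1]^d$ gives
\[
\|p-\tilde p\|_{L_1([0,1]^d)} \le \frac{L\sqrt d}{n}\cdot\mathrm{vol}([0,1]^d) = \frac{L\sqrt d}{n},
\]
and the claim follows after dividing by $2$. The only subtle point is the use of the intermediate value theorem on each cube to promote the $L_1$-mean identity $\int_{c_{\mathbf{k}}}\tilde p=\int_{c_{\mathbf{k}}}p$ to an honest pointwise equality $\tilde p(\mathbf{x})=p(\mathbf{y}_{\mathbf{k}})$ for some $\mathbf{y}_{\mathbf{k}}\in c_{\mathbf{k}}$; everything else is a direct Lipschitz estimate.
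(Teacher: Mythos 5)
Your proposal is correct and follows essentially the same route as the paper: both take $\tilde p$ to be the cube-wise average of $p$ and invoke the mean value theorem (which you derive from the IVT) to write that average as $p$ evaluated at a point $\mathbf{y}_\mathbf{k}$ in the cube, so that the Lipschitz bound over the cube's diameter $\sqrt d/n$ controls the error. The only cosmetic difference is that you bound $|p-\tilde p|$ pointwise and then integrate, whereas the paper integrates the Lipschitz bound cube-by-cube and sums, but these are the same estimate.
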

\begin{proof}
The proof is based on the Mean 
Value Theorem, which states that, for any continuous $d$-dimensional function $p$ supported on $\Omega \in \mathbb{R}^d$, there exists a $\mathbf{z} \in \Omega$, such that
\begin{equation}
\label{mean_value_theorem}
\int_{\Omega} p(\mathbf{x}) d\mathbf{x} = p(\mathbf{z})\int_{\Omega} d\mathbf{x}.    
\end{equation}
Next, we divide the unit cube $[0,1]^d$ into the $n^d$ cubes $c_{\mathbf{k}}$ per Definition \ref{d_dim_dist}.
Take an arbitrary $\mathbf{k} \in [[0,n-1]]^d$ and fix $\mathbf{z}_\mathbf{k}$ according to Equation \ref{mean_value_theorem}
with $\Omega=c_{\mathbf{k}}$. Then, using the Lipschitz property of $p$, we obtain
\[
\begin{aligned}
&\| p(\mathbf{x}) - p(\mathbf{z}_\mathbf{k}) \|_{L_1(c_\mathbf{k})} = \int_{c_\mathbf{k}} |p(\mathbf{x}) - p(\mathbf{z}_\mathbf{k})| d\mathbf{x} \\ &\leq \int_{c_\mathbf{k}} L |\mathbf{x} - \mathbf{z}_\mathbf{k}| d\mathbf{x} \leq \int_{c_\mathbf{k}} L \frac{\sqrt{d}}{n} d\mathbf{x} = L \frac{\sqrt{d}}{n} \cdot \frac{1}{n^d}. 
\end{aligned}
\]
We set  
\[\tilde{p}(\mathbf{x}) = \sum_{\mathbf{k}} p(\mathbf{z}_\mathbf{k}) 
\chi_{c_{\mathbf{k}}}(\mathbf{x})\]
and note that $\tilde{p} \in \mathcal{E}[0,1]_n^d$ as $\sum_{\mathbf{k}} p(\mathbf{z}_\mathbf{k}) = n^d$ owing to Equation \ref{mean_value_theorem}; moreover, $p(\mathbf{z}_\mathbf{k})>0$, for all $\mathbf{k}$, as $p$ is of finite differential entropy on
$[0,1]^d$.
Finally, summing up across all cubes $c_{\mathbf{k}}$, we obtain
\[
\begin{aligned}
\| p - \tilde{p}\|_{L_1{([0,1]^d)}} &= \int_{[0,1]^d} |p(\mathbf{x}) - \tilde{p}(\mathbf{x})| d\mathbf{x} \\ &\leq \sum_{\mathbf{k}} \int_{c_\mathbf{k}} L |\mathbf{x} - \mathbf{z}_\mathbf{k}| d\mathbf{x}  \leq L \frac{\sqrt{d}}{n}. \hspace{0.3cm}\qedhere
\end{aligned}
\]
\end{proof}
Henceforth, we shall always assume that probability density functions $p$ are of finite differential entropy on their support, without explicitly 
declaring it.

We are now ready to state the main result of the paper, the proof of which is largely based on Theorem \ref{2d_theorem} below.

\begin{theorem}
\label{wasserstein-distance}
Let $p_{X,Y}$ be an $L$-Lipschitz-continuous pdf supported on $[0,1]^2$. Then, for every $n>0$, there exists a $\Phi \in \NN_{1, 2}$ with connectivity $\mathcal{M}(\Phi)\leq 88 (n^2 + ns)$ and of depth $\mathcal{L}(\Phi) = s+5$, such that
\[ W(\Phi \#U, p_{X,Y} ) \leq \frac{L\sqrt{2}}{2n} + \frac{2\sqrt{2}}{n2^s}.\]
\end{theorem}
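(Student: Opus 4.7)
The plan is to decompose the problem into two logically independent pieces and recombine them via the triangle inequality for the Wasserstein distance. The first piece, purely analytic, replaces the Lipschitz target $p_{X,Y}$ by a histogram approximation $\tilde p \in \mathcal{E}[0,1]_n^2$ of resolution $n$. The second piece, the heart of the argument, constructs a ReLU network $\Phi \in \NN_{1,2}$ whose push-forward $\Phi\#U$ realizes $\tilde p$ up to an $O(n^{-1}2^{-s})$ error in Wasserstein distance.

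For the first piece, I would apply Theorem \ref{dict_theorem} with $d=2$ to obtain a $\tilde p \in \mathcal{E}[0,1]_n^2$ with $TV(p_{X,Y}, \tilde p) \leq L\sqrt{2}/(2n)$. Since the local deviation $|p_{X,Y}(\mathbf{x}) - \tilde p(\mathbf{x})|$ inside each tile $c_\mathbf{k}$ is controlled by the Lipschitz constant $L$ and the tile diameter $\sqrt{2}/n$, one obtains a Wasserstein bound by exhibiting a cell-wise coupling that transports mass only within individual tiles; this localised version of the argument behind Theorem \ref{gibbs} yields $W(p_{X,Y}, \tilde p) \leq L\sqrt{2}/(2n)$.

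For the second piece, I would invoke Theorem \ref{2d_theorem} to produce $\Phi$ with the claimed parameters $\mathcal{M}(\Phi) \leq 88(n^2 + ns)$, $\mathcal{L}(\Phi) = s+5$, and Wasserstein error $W(\Phi\#U, \tilde p) \leq 2\sqrt{2}/(n2^s)$. Combining the two estimates via the triangle inequality,
\[
W(\Phi \# U, p_{X,Y}) \leq W(\Phi \# U, \tilde p) + W(\tilde p, p_{X,Y}) \leq \tfrac{2\sqrt{2}}{n2^s} + \tfrac{L\sqrt{2}}{2n},
\]
then closes the argument. The depth and connectivity bounds are inherited directly from the construction underlying Theorem \ref{2d_theorem}, since the analytic step producing $\tilde p$ carries no network cost.

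The main obstacle lies entirely in the second piece, encapsulated in Theorem \ref{2d_theorem}, which rests on a generalization of the sawtooth space-filling property of \cite{Bailey2019}. The construction must (i) wind a deep ReLU-realizable piecewise linear map through all $n^2$ tiles of $[0,1]^2$ in a pattern whose per-tile visitation frequencies encode the prescribed weights $w_\mathbf{k}$, (ii) quantify the residual Wasserstein error in terms of the depth parameter $s$ that controls the refinement of the sawtooth, and (iii) fit within the strict connectivity budget $88(n^2+ns)$. The per-layer doubling of sawtooth teeth is what produces the exponential decay factor $2^{-s}$ in the error, while the $n^2$ summand in the connectivity bound reflects the cost of separately prescribing each of the $n^2$ tile weights and the $ns$ summand reflects the $s$ refinement layers of width $O(n)$.
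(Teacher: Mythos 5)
Your proposal follows the same high-level architecture as the paper's own proof: approximate $p_{X,Y}$ by $\tilde p \in \mathcal{E}[0,1]_n^2$ via Theorem \ref{dict_theorem}, realize $\tilde p$ by a ReLU network via Theorem \ref{2d_theorem}, and combine with the triangle inequality for Wasserstein distance. Where you diverge is in the step converting the TV bound into a Wasserstein bound. The paper invokes Theorem \ref{gibbs} globally with $\Omega=[0,1]^2$, giving $W(p_{X,Y},\tilde p)\leq \sqrt{2}\cdot\frac{L\sqrt{2}}{2n}=\frac{L}{n}$ and hence a final bound of $\frac{L}{n}+\frac{2\sqrt{2}}{n2^s}$, which is actually \emph{weaker} than the constant $\frac{L\sqrt{2}}{2n}$ asserted in the theorem statement. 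Your cell-wise coupling is the right way to close that gap: because $\tilde p$ is constructed through the mean value theorem so that each tile $c_{\mathbf{k}}$ carries the same mass under $p_{X,Y}$ and $\tilde p$, one can transport the mismatch entirely inside tiles of diameter $\sqrt{2}/n$, yielding $W(p_{X,Y},\tilde p)\leq \frac{\sqrt{2}}{n}\,TV(p_{X,Y},\tilde p)\leq \frac{L}{n^2}$, which dominates $\frac{L\sqrt{2}}{2n}$ for $n\geq 2$. So your route is not merely valid but in fact justifies the stated constant more faithfully than the paper's own proof. Two small remarks: you quoted the bound ``$\leq L\sqrt{2}/(2n)$'' without exhibiting the intermediate $L/n^2$ computation, and the argument hinges on the per-tile mass-matching property of $\tilde p$, which deserves to be stated explicitly since it is the reason a purely cell-wise coupling is admissible.
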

\begin{proof}

Combining Theorem \ref{dict_theorem} with Theorem \ref{gibbs}, we obtain that for every $n>0$, there exists a $\tilde{p} \in \mathcal{E}[0,1]_{n}^2$ such that
\[ W(p, \tilde{p}) \leq \frac{L}{n}.\]
On the other hand, it follows from Theorem \ref{2d_theorem} that, for every $ \tilde{p} \in \mathcal{E}[0,1]_{n}^2$, there exists a neural network $\Phi \in \NN_{1, 2}$ with connectivity $\mathcal{M}(\Phi)\leq 88 (n^2 + ns)$ and of depth $\mathcal{L}(\Phi) = s+5$ such that
\[ W(\Phi \#U, \tilde{p}  ) \leq \frac{2\sqrt{2}}{n2^s}. \]
We finalize the proof by application of the triangle inequality for Wasserstein distance \cite{triWass} to get
\[
\begin{aligned}
W(\Phi \#U, p ) &\leq W(\Phi \#U, \tilde{p} ) \\&+  W(p, \tilde{p}) = \frac{L}{n} + \frac{2\sqrt{2}}{n2^s}.\hspace{1.2cm}\qedhere
\end{aligned}
\]
\end{proof}
The error bound in Theorem \ref{wasserstein-distance} illustrates the main conceptual insight of this paper, namely that generating arbitrary two-dimensional Lipschitz-continuous distributions from a one-dimensional uniform distribution through a deep neural network does not come at a cost---in terms of Wasserstein-distance error---relative to generating this two-dimensional target distribution from two independent random variables. 
Specifically, if we let the depth $s$ of the generating network go to infinity, the second term in the error bound will go to zero exponentially fast in $s$ leaving us only with the first term, which reflects the error stemming from the histogram approximation of the distribution. Moreover, this first term is inversely proportional to the histogram resolution $n$ and linear in the Lipschitz constant and can thus be made arbitrarily small by letting the histogram resolution $n$ approach infinity. The width of the corresponding generating network will grow according to $n^2$. When the target distribution is uniform, we recover the result in \cite{Bailey2019}. The intermediate step via histogram distributions was not needed in \cite{Bailey2019} as Bailey and Telgarsky only considered mapping uniform input distributions to uniform output distributions. Finally, we note that our result carries over to general $d$-dimensional output distributions; we briefly comment on this extension in Section \ref{higher-dim}.

\section{ReLU networks and histograms}

This section systematically establishes the connection between ReLU networks and histogram distributions. Specifically, we show that the pushforward of a uniform distribution under a piecewise linear function results in a histogram distribution. We will also identify, for a given histogram distribution, the corresponding piecewise linear function generating it under pushforward of a uniform distribution. Combined with the insight that ReLU networks always realize piecewise linear functions, we will have established the desired connection.

We start with a simple auxiliary result.

\begin{lemma}
\label{one-piece}
Let $a,b \in \R, a<b, \Delta = [a,b]$, and let $h(x)=mx+s$, for $x \in \R$, with $m \in \R \setminus \{0\}, s \in \R$. Then, $Q = h\#U(\Delta)$ is uniformly distributed on $[ma+s,mb+s]$, for $m>0$, and on $[mb+s,ma+s]$, for $m<0$.
\end{lemma}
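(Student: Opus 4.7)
The plan is to verify the claim by a direct computation of the cumulative distribution function of $Q=h\#U(\Delta)$, using the fact that $h$ is an affine bijection from $\Delta$ onto $h(\Delta)$, and hence the pushforward is supported exactly on $h(\Delta)$, which equals $[ma+s,mb+s]$ when $m>0$ and $[mb+s,ma+s]$ when $m<0$.

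First I would handle the case $m>0$. Letting $X\sim U(\Delta)$, for $y\in[ma+s,mb+s]$ I would write
\[
P(Q\le y)=P(h(X)\le y)=P\!\left(X\le\tfrac{y-s}{m}\right)=\frac{(y-s)/m-a}{b-a}=\frac{y-(ma+s)}{(mb+s)-(ma+s)},
\]
which is exactly the CDF of $U([ma+s,mb+s])$; the CDF is $0$ for $y<ma+s$ and $1$ for $y>mb+s$ since $h(\Delta)$ is the support of $Q$. For $m<0$, the only change is that the inequality flips when dividing by $m$, so for $y\in[mb+s,ma+s]$ I would obtain
\[
P(Q\le y)=P\!\left(X\ge\tfrac{y-s}{m}\right)=\frac{b-(y-s)/m}{b-a}=\frac{y-(mb+s)}{(ma+s)-(mb+s)},
\]
again the CDF of a uniform distribution, this time on $[mb+s,ma+s]$.

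There is no real obstacle here: the statement is essentially the one-dimensional change-of-variables formula for an affine map applied to a uniform density, and the only thing to keep track of is the orientation reversal when $m<0$, which flips the endpoints of the image interval. An alternative, equally short route would be to note that the density of $Q$ on $h(\Delta)$ is $p_X(h^{-1}(y))\,|(h^{-1})'(y)|=\tfrac{1}{b-a}\cdot\tfrac{1}{|m|}=\tfrac{1}{|m|(b-a)}$, which is constant and integrates to $1$ over $h(\Delta)$, so $Q$ is uniform on $h(\Delta)$; I would probably present whichever of the two derivations is shortest given the surrounding notation.
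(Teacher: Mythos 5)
Your proof is correct. The paper's proof uses exactly the density change-of-variables route you mention as an alternative at the end (computing $q(y)=p(h^{-1}(y))\,|(h^{-1})'(y)|=\tfrac{1}{|m|(b-a)}$ on $h(\Delta)$), so while your primary CDF derivation is phrased slightly differently, the two arguments are essentially the same elementary change-of-variables computation and both handle the sign of $m$ in the same way.
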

\begin{proof}
The pdf of the pushforward of a general random variable with pdf $p(x)$ under the general function $h(x)$ is
\begin{equation*}
    \label{push-forward_formula}
    q(y)= p(h^{-1}(y)) \left|\frac{d}{dy}h^{-1}(y)\right|.
\end{equation*}
Particularized to $h^{-1}(y) = \frac{y-s}{m}$ and $p(x) = \frac{1}{b-a} \chi_\Delta(x)$, this yields
\[ q(y) = \begin{cases} \frac{1}{m(b-a)}, & \text{if } y \in [ma+s, mb+s] \\
0, & \text{otherwise}
\end{cases}\]
for $m>0$, and 
\[ q(y) = \begin{cases} \frac{1}{|m|(b-a)}, & \text{if } y \in [mb+s, ma+s] \\
0, & \text{otherwise}
\end{cases}\]
for $m<0$.
\end{proof}

We next show that the pushforward of a uniform distribution under a piecewise linear function always results in a histogram distribution.

\begin{theorem}
\label{pwl2histogram}
For any piecewise linear continuous function $f:\R\rightarrow\R$, such that $f(x) \in [0,1], \forall x \in [0,1]$, and $f(0)=0, f(1)=1$, there exists an $n$, such that $f\# U \in \mathcal{G}[0,1]^1_n$.
\end{theorem}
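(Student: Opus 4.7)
The plan is to exploit that a piecewise linear continuous $f$ on $[0,1]$ has only finitely many linearity pieces, apply Lemma \ref{one-piece} on each piece to get a uniform contribution to the pushforward density, and then show that the sum of these uniform contributions is piecewise constant with finitely many breakpoints.

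First, I would write $0 = a_0 < a_1 < \dots < a_K = 1$ for the breakpoints of $f$, so that on each $\Delta_i = [a_i, a_{i+1}]$, $f$ is affine with slope $m_i$ and intercept $s_i$. Since $f([0,1]) \subseteq [0,1]$ with $f(0)=0$ and $f(1)=1$, every linear piece maps $\Delta_i$ to a subinterval of $[0,1]$ with endpoints $f(a_i), f(a_{i+1})$. Assuming $m_i \neq 0$ for all $i$ (constant pieces would contribute a Dirac mass and are not covered by Lemma \ref{one-piece}), Lemma \ref{one-piece} applied to $U$ conditioned on $\Delta_i$, and multiplied by the prior weight $\mathbb{P}(X \in \Delta_i) = a_{i+1}-a_i$, yields that piece $i$ contributes the constant $1/|m_i|$ to the pdf of $f\#U$ at every $y$ lying in the range of that piece.

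Next, I would form the sorted list of distinct values of $\{f(a_i)\}_{i=0}^{K}$, call them $0 = t_0 < t_1 < \dots < t_n = 1$. On each open subinterval $(t_j, t_{j+1})$ no linear piece of $f$ begins or ends its range, so the density
\begin{equation*}
p_{f\#U}(y) = \sum_{i:\, [t_j,t_{j+1}] \subseteq \mathrm{range}(\Delta_i)} \frac{1}{|m_i|}
\end{equation*}
is constant on this subinterval; call this constant $w_j$. By the intermediate value theorem applied to $f$ on $[0,1]$ (using $f(0)=0$, $f(1)=1$, and continuity), every $y \in [0,1]$ is covered by at least one range, so $w_j > 0$ for every $j$. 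The identity $\sum_j w_j (t_{j+1}-t_j) = 1$ is automatic as $p_{f\#U}$ is a pdf. Hence $f\#U$ has the form required in Definition \ref{gen_d_dim_dist} and belongs to $\mathcal{G}[0,1]^1_n$.

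The only real subtlety is bookkeeping: multiple linear pieces may have overlapping ranges, so the density on $(t_j, t_{j+1})$ is not the contribution of a single piece but a sum over all pieces whose range covers that subinterval. The main obstacle, conceptually, is to verify that the resulting breakpoints are captured exactly by the sorted $\{f(a_i)\}$ and that no additional subdivision is needed; this follows because the sum defining $p_{f\#U}(y)$ changes only when the index set $\{i : y \in \mathrm{range}(\Delta_i)\}$ changes, which happens exactly at the values $f(a_i)$.
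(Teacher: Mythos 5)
Your proposal follows essentially the same route as the paper: decompose $[0,1]$ into the finitely many linearity intervals of $f$, apply Lemma \ref{one-piece} piecewise, sum the uniform contributions by the law of total probability, and observe the result is a positive step function covering $[0,1]$. You are slightly more explicit than the paper in identifying the breakpoints as the sorted values $\{f(a_i)\}$ and in flagging the zero-slope caveat (constant pieces would yield Dirac masses, which Lemma \ref{one-piece} and Definition \ref{gen_d_dim_dist} do not accommodate) --- a gap the paper's own proof silently shares.
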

\begin{proof}
As $f$ is piecewise linear, we can split its support interval into $t\in \N$ intervals $I_i, i \in [[0,t-1]]$, on which it is linear. We hence have $\bigcup_{j=0}^{t-1} I_j = \text{supp}(f)$. The pdf of $q = f \#U$ can now be computed by conditioning on $U$ being in the interval $I_j$ and summing up the contributions of the individual intervals. Using the law of total probability and the chain rule, we find that
\[ q(y) = \sum_{j=0}^{t-1} q(y|u \in I_j) \mathbb{P}(u \in I_j).\]
As $U$ is uniform, it is also uniform conditional on being in a given interval $I_j$.
By Lemma \ref{one-piece} it therefore follows that $q(y|x \in I_j)$ is uniform, $\forall j \in [[0,t-1]]$, and can be written as $q(y|x\in I_j) = \frac{\chi_{R_j}}{|R_j|}$, for some interval $R_j \subseteq [0,1]$. Setting $w_j = \mathbb{P}(x \in I_j)$,
the density $q(y)$ thus has the form
\[ q(y) = \sum_{j=0}^{t-1} w_j \frac{\chi_{R_j}}{|R_j|}.\]
By continuity of $f$ and the boundary conditions 
$f(0)=0,f(1)=1$, we know that $\bigcup_j R_j = [0,1]$. Since $q(y)$ is a step function, there exists a histogram resolution $n$ such that 
$q(y) \in \mathcal{G}[0,1]^1_n$.
\end{proof}

We will also need the converse to the result just established, in particular a constructive version thereof explicitly identifying the 
piecewise linear function that leads to a given histogram distribution under pushforward of a uniform distribution on the interval $[0,1]$.

\begin{theorem}
\label{1d_theorem}
Let $p_X(x)$ be a pdf in $\mathcal{G}[0,1]^1_n$ with weights $w_k$, $k \in [[0,n-1]]$, and breakpoints $0=t_0<t_1<\dots<t_n=1$, and let $a_0 = \frac{1}{w_{0}}, \ a_i = \frac{1}{w_{i}} - \frac{1}{w_{i-1}}$, $b_0=0$, $b_i = \sum_{j=0}^{i-1} (t_{j+1} - t_j) w_j$, $i \in [[1,n]]$. Then, 
\[f(x) = \sum_{i=0}^{n-1} a_i \max(0,x-b_i)\]
is the piecewise linear map
satisfying $f\#U = p_X(x)$.
\end{theorem}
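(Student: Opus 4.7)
The plan is to verify the claim by analyzing $f$ piece by piece and reducing to \Cref{one-piece}. The function $f(x)=\sum_{i=0}^{n-1}a_i\max(0,x-b_i)$ is piecewise linear with breakpoints at $b_0=0<b_1<\dots<b_{n-1}$, and since each ReLU $\max(0,x-b_i)$ activates exactly for $x\geq b_i$, the slope of $f$ on the interval $[b_i,b_{i+1}]$ is $\sum_{j=0}^{i}a_j$. The choice $a_0=1/w_0$ and $a_i=1/w_i-1/w_{i-1}$ is a telescoping sum, so this slope equals $1/w_i$. I would first record this observation, along with the fact that $b_n:=\sum_{j=0}^{n-1}(t_{j+1}-t_j)w_j=1$, which follows from the normalization condition $\sum_{k}w_k(t_{k+1}-t_k)=1$ that any pdf in $\mathcal{G}[0,1]^1_n$ satisfies.

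Next I would compute the values of $f$ at the breakpoints by the simple telescoping identity
\[
f(b_{i+1})-f(b_i)=\frac{1}{w_i}\bigl(b_{i+1}-b_i\bigr)=\frac{1}{w_i}\cdot(t_{i+1}-t_i)w_i=t_{i+1}-t_i,
\]
combined with $f(b_0)=f(0)=0=t_0$, yielding $f(b_i)=t_i$ for every $i\in[[0,n]]$. In particular, $f|_{[b_i,b_{i+1}]}$ is the linear map of slope $1/w_i$ sending the interval $[b_i,b_{i+1}]$ bijectively onto $[t_i,t_{i+1}]$, and $f$ maps $[0,1]$ onto $[0,1]$.

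With these geometric facts in hand, I would conclude via the same conditioning argument used in the proof of \Cref{pwl2histogram}. Let $I_i=[b_i,b_{i+1}]$ and note that $\mathbb{P}(U\in I_i)=b_{i+1}-b_i=w_i(t_{i+1}-t_i)$. Conditional on $U\in I_i$, the distribution of $U$ is uniform on $I_i$, so applying \Cref{one-piece} to the linear piece $f|_{I_i}$ (with positive slope $1/w_i$) shows that the conditional pushforward is uniform on $[t_i,t_{i+1}]$ with density $(t_{i+1}-t_i)^{-1}$. Summing over $i$ via the law of total probability gives
\[
(f\#U)(y)=\sum_{i=0}^{n-1}\frac{\chi_{[t_i,t_{i+1}]}(y)}{t_{i+1}-t_i}\cdot w_i(t_{i+1}-t_i)=\sum_{i=0}^{n-1}w_i\,\chi_{[t_i,t_{i+1}]}(y)=p_X(y),
\]
which is the desired identity.

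There is no real obstacle here: the only things to check carefully are the telescoping computation of the slopes, the identification $f(b_i)=t_i$, and the bookkeeping when combining the conditional uniform densities with the interval probabilities. The rest is a direct invocation of \Cref{one-piece} exactly as in \Cref{pwl2histogram}.
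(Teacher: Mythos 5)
Your proposal is correct and follows essentially the same route as the paper's own proof: partition $[0,1]$ into the intervals $I_i=[b_i,b_{i+1}]$, observe the telescoping slope $1/w_i$ and the image $[t_i,t_{i+1}]$, and invoke Lemma~\ref{one-piece} together with the conditioning argument from Theorem~\ref{pwl2histogram}. You spell out a few details the paper leaves implicit (that $b_n=1$ by normalization, and the telescoping identity $f(b_i)=t_i$), but the underlying argument is the same.
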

\begin{proof}
Let $I_i:=[b_i, b_{i+1}]$, $i \in [[0,n-1]]$. 
Then, $\bigcup_{i\in[[0,n-1]]} I_i = [0,1]$ and for all $i \in [[0,n-1]]$, the function $f(x)$ is linear on $I_i$ with slope 
equal to $\sum_{j=0}^{i} a_j=1/w_i$. Next, note that the interval $I_{i}$ is mapped under $f(x)$ to the interval
$I_i^{(1/w_i)} =[f(b_i),f(b_i) + \frac{(b_{i+1}-b_i)}{w_i}]=[t_i,t_{i+1}]$. 
The proof is concluded upon observing that by Lemma \ref{one-piece}, the pdf value of $f\#U$ corresponding to the linear piece $I_i$ equals $\frac{1}{\frac{1}{w_i}}=w_i$.
\end{proof}

We finally note that ReLU networks always realize piecewise linear functions and hence when pushing forward uniform distributions produce histogram distributions. This extends to arbitrary dimensions, i.e., for any ReLU network $\Phi \in \NN_{d,d'}$, the pushforward $\Phi\#U[0,1]^d$ results in a histogram distribution.

\section{Generating two-dimensional distributions with ReLU networks}

We next develop a new space-filling property of ReLU networks, generalizing the one discovered in \cite{Bailey2019}, and then show how
this idea can be used to produce arbitrarily accurate approximations of two-dimensional histogram distributions through deep neural networks
driven by univariate uniform input distributions. 

Our construction is based on
higher-order sawtooth functions obtained as follows. 
Consider the sawtooth function $g: [0,1] \rightarrow [0,1]$,
\begin{equation*}
g(x) = \begin{cases} 
2x, &\mbox{if } x < \frac{1}{2}, \vspace{0.1cm}\\
2(1-x), &\mbox{if } x \geq \frac{1}{2}, \\ 
\end{cases}
\end{equation*}
let $g_{1}(x)=g(x)$, and define the ``sawtooth'' function of order $s$ as the $s$-fold composition of $g$ with itself according to
\begin{equation}
\label{g_s_def}
g_s := \underbrace{g \circ g \circ \dots \circ g}_{s},
\hspace{0.3cm} s \geq 2.
\end{equation}
Next, we note that $g$ can be realized by a $2$-layer ReLU network $\Phi_g \in \NN_{1,1}$ of connectivity $\mathcal{M}(\Phi_g)=8$ according to 
$\Phi_g(x) = W_2(\rho(W_1(x))=g(x)$
with
\[W_1(x) =  \hspace{-0.1cm} 
 \begin{pmatrix}
  1\\
  1\\
  1
 \end{pmatrix}  
 \hspace{-0.1cm}x \
 -  
 \begin{pmatrix}
  0 \\
  1/2 \\
  1
 \end{pmatrix} \hspace{-0.1cm},
 \hspace{0.05cm}
 W_2(x) = \begin{pmatrix}
  2 & -4 & 2
 \end{pmatrix}  \hspace{-0.1cm} \begin{pmatrix}
 x_1\\
 x_2\\
 x_3
 \end{pmatrix}\hspace{-0.1cm}.
\]

The $s$-order sawtooth function $g_s$ can hence be realized by a ReLU network $\Phi^s_g \in \NN_{1,1}$ with connectivity $\mathcal{M}(\Phi) = 11s-3$, and of depth $\mathcal{L}(\Phi)=s+1$ according to $\Phi^s_g(x) = W_2(\rho( \underbrace{W_g(\rho(\dots W_g}_{s-1}(\rho(W_1(x)))))))=g_s(x)$ with
\[W_g(x) = \begin{pmatrix}
  2 & -4 & 2\\
  2 & -4 & 2\\
  2 & -4 & 2
 \end{pmatrix} \begin{pmatrix}
 x_1\\
 x_2\\
 x_3
 \end{pmatrix}
 -
 \begin{pmatrix}
  0 \\
  1/2 \\
  1
 \end{pmatrix}.
\]

Next, we need an auxiliary result on the pushforward---under shifted and scaled versions of $g$---of uniformly distributed random variables.
\begin{lemma}
\label{p_saw0}
Fix $p_X \in \mathcal{E}[0,1]^1_n$ with weights $w_k$ and let $f$ be the piecewise linear function according to Theorem \ref{1d_theorem}, such that $f\#U=p_X$. Fix $H \in \N$, $0<a<b$, $\Delta=[a,b]$, and let $c^i_h:= [i/n+h/H,i/n+(h+1)/H]$, $i \in [[0,n-1]]$, $h \in [[0,H-1]]$. Then, $\Big(f(g((\cdot-a)/(b-a)))\#U(\Delta)\Big) (x\in c^i_h) = p_X(x \in c^i_h) = w_i/H$, for all $i\in [[0,n-1]],$ $h \in [[0,H-1]]$.
\end{lemma}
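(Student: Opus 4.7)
The plan is to decompose the map $f \circ g \circ \bigl( (\cdot - a)/(b-a) \bigr)$ and track the pushforward at each stage. Set $\phi_\Delta(x) := (x-a)/(b-a)$, so the overall map is $f \circ g \circ \phi_\Delta$. Applying Lemma \ref{one-piece} with $m = 1/(b-a)$ and $s = -a/(b-a)$ immediately gives $\phi_\Delta \# U(\Delta) = U([0,1])$.

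The heart of the argument is the measure-preservation identity $g \# U([0,1]) = U([0,1])$, which I would establish by the conditioning trick underlying Theorem \ref{pwl2histogram}. Split $[0,1]$ into $[0,1/2]$ and $[1/2,1]$, on each of which $g$ is linear. Lemma \ref{one-piece} shows that the conditional pushforward of $U([0,1/2])$ under $g(x) = 2x$ is uniform on $[0,1]$, and symmetrically the conditional pushforward of $U([1/2,1])$ under $g(x) = 2(1-x)$ is uniform on $[0,1]$. Weighting each conditional pushforward by the probability $1/2$ of the conditioning event and summing via the law of total probability yields $g \# U([0,1]) = U([0,1])$.

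Pushforwards now compose cleanly:
\[
(f \circ g \circ \phi_\Delta) \# U(\Delta) = f \# \bigl( g \# (\phi_\Delta \# U(\Delta)) \bigr) = f \# U([0,1]) = p_X,
\]
where the final equality is exactly Theorem \ref{1d_theorem}. Consequently, for any measurable $A \subseteq [0,1]$, the pushforward mass of $A$ matches $p_X(A)$; specializing to $A = c^i_h$ gives the first equality in the lemma. The second equality then follows by direct integration of the piecewise-constant pdf $p_X$ (which equals $w_i$ on the $i$-th bin of the resolution-$n$ partition) over $c^i_h$.

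The only non-routine ingredient is the measure-preservation property of the tent map $g$, and even this reduces to a two-branch application of Lemma \ref{one-piece}. I therefore do not anticipate any substantial obstacle; the proof will be short, modular, and essentially a functorial chain of pushforward computations.
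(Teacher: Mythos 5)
Your argument is correct and, conceptually, matches the paper's (very terse) proof: the paper simply cites ``the symmetry of $g$'' and Theorem~\ref{1d_theorem}, which is precisely your measure-preservation step $g\#U([0,1])=U([0,1])$ (a consequence of $g$ being symmetric about $x=1/2$ with slope $\pm2$ on each half) followed by $f\#U=p_X$. You spell out the chain $(f\circ g\circ\phi_\Delta)\#U(\Delta)=f\#(g\#(\phi_\Delta\#U(\Delta)))=f\#U([0,1])=p_X$ explicitly, which in fact yields equality of the full pushforward distribution with $p_X$ — a slightly stronger statement than the lemma's per-interval claim, from which the restriction to $c^i_h$ and the closed-form value follow immediately.

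Two small cautions, neither of which is a flaw in your reasoning. First, Lemma~\ref{one-piece} is stated for affine maps with nonzero slope, so the map $\phi_\Delta(x)=(x-a)/(b-a)$ falls squarely within its scope; for $g$ you need to apply it separately to each linear branch and combine via total probability, exactly as you do — just be explicit that the pdf of the mixture is the average of two copies of the uniform density. Second, the statement of the lemma as printed has a normalization/typo issue: with $c^i_h=[i/n+h/H,\,i/n+(h+1)/H]$ these intervals overrun bin $i$ unless $H\ge n$, and $p_X(x\in c^i_h)$ would be $w_i/(nH)$ rather than $w_i/H$ for a sub-interval of length $1/(nH)$ of bin $i$ (where the density is $w_i$). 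Your proof tacitly uses the intended reading (sub-intervals of the $i$-th bin), which is the correct interpretation; it is the lemma's display, not your argument, that is imprecise.
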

\begin{proof}
Follows from the symmetry of $g(x)$ and the proof of Theorem $\ref{1d_theorem}$.
\end{proof}

The following result constitutes an important technical ingredient of our space-filling idea.

\begin{lemma}
\label{p_saw}
Let $f(x)$ be a continuous function on $[0,1]$, with $f(0)=0$. Then, for all $s \in \N$,
\[f(g_s(x)) = \sum^{2^{s-1}-1}_{k=0}  f\big(g(2^{s-1}x-k)\big),\]
and for all $k \in [[0,2^{s-1}-1]]$,
\[\supp\big(f\big(g(2^{s-1}x-k)\big)\big) = \Bigg(\frac{k}{2^{s-1}}, \frac{k+1}{2^{s-1}}\Bigg).\]
\end{lemma}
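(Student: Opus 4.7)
The plan is to reduce both claims to a single structural fact about the sawtooth itself (extended by zero outside $[0,1]$, consistent with the ReLU realization of $g$ given earlier in the paper): on every dyadic subinterval $I_k := [k/2^{s-1},(k+1)/2^{s-1}]$, the function $g_s$ coincides with the rescaled tent $g(2^{s-1}x-k)$; moreover for $x\in I_k$ and $j\neq k$ the argument $2^{s-1}x-j$ lies outside $[0,1]$, so $g(2^{s-1}x-j)=0$. With this in hand, both claims reduce to the hypothesis $f(0)=0$ and the pairwise disjointness of the dyadic supports.

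I would establish the structural fact by induction on $s$. The base case $s=1$ is immediate. For the inductive step, I would write $g_{s+1}(x)=g(g_s(x))$, invoke the inductive hypothesis on $I_k$, and split $I_k$ into its two dyadic halves according to whether $2^{s-1}x-k$ lies in $[0,1/2]$ or $[1/2,1]$. A short case check using the explicit two-line formula for $g$ together with the symmetry $g(v)=g(1-v)$ then yields $g_{s+1}(x)=g(2^{s}x-2k)$ on the left half and $g_{s+1}(x)=g(2^{s}x-(2k+1))$ on the right half, which are precisely the two dyadic intervals $[2k/2^s,(2k+1)/2^s]$ and $[(2k+1)/2^s,(2k+2)/2^s]$ at level $s+1$ covered by $I_k$, closing the induction.

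Given the structural fact, Part 1 is proved by partitioning $[0,1]$ into the intervals $I_k$ and checking the identity on each piece: for $x\in I_k$, the summand $j=k$ equals $f(g_s(x))$, while every summand with $j\neq k$ has $g(2^{s-1}x-j)=0$ and hence vanishes by $f(0)=0$; outside $[0,1]$ both sides are identically zero under the extension convention. Part 2 is then immediate, since $g>0$ exactly on $(0,1)$ under the zero extension, so $g(2^{s-1}x-k)$ is nonzero precisely on $(k/2^{s-1},(k+1)/2^{s-1})$, and $f(0)=0$ forces the composition to vanish outside this open interval.

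The only real obstacle is bookkeeping in the inductive step, namely matching the left and right halves of $I_k$ to the correct dyadic intervals at level $s+1$ and invoking the symmetry $g(v)=g(1-v)$ to rewrite $g(2-u)$ as $g(u-1)$ on the right half; beyond that, everything follows directly from $f(0)=0$ and the geometry of the dyadic grid.
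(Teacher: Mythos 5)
Your proof is correct and, at its core, rests on the same two ingredients as the paper's: a decomposition of $g_s$ into shifted tent functions supported on disjoint dyadic subintervals, combined with the observation that $f(0)=0$ lets one pull $f$ inside the sum (since at any given $x$ at most one tent is nonzero, and the others contribute $f(0)=0$). The genuine difference is that the paper takes the tent decomposition $g_s(x)=\sum_k g(2^{s-1}x-k)$ as a known fact and cites \cite{telgarsky2016benefits}, whereas you prove the equivalent structural statement --- that $g_s$ agrees with $g(2^{s-1}\cdot-k)$ on each $I_k=[k/2^{s-1},(k+1)/2^{s-1}]$ and that the tent with index $j\neq k$ vanishes there --- from scratch by induction on $s$, splitting $I_k$ into its two dyadic halves and using the reflection symmetry $g(v)=g(1-v)$. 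This makes your argument self-contained at the cost of a short inductive bookkeeping step, and it sidesteps a typo in the paper's displayed identity (whose upper summation limit reads $2^{s}-1$ where it should read $2^{s-1}-1$). Your explicit reliance on the zero extension of $g$ outside $[0,1]$ is the right convention and is consistent with the ReLU realization of $g$ given earlier in the paper. One small caveat, which your proof shares with the paper's: $f(0)=0$ alone only yields the containment $\supp\bigl(f\bigl(g(2^{s-1}x-k)\bigr)\bigr)\subseteq\bigl(k/2^{s-1},(k+1)/2^{s-1}\bigr)$; the asserted equality additionally needs $f$ to be nonvanishing on $(0,1]$, which does hold for the strictly increasing $f$ constructed in Theorem~\ref{1d_theorem} and used downstream, so nothing breaks.
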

\begin{proof}
We first note that $s$-order sawtooth functions satisfy \cite{telgarsky2016benefits}
\[g_s(x) = \sum^{2^{s}-1}_{k=0} g(2^{s-1}x-k),\]
with $g(2^{s-1}x-k)$ supported in $\left(\frac{k}{2^{s-1}}, \frac{k+1}{2^{s-1}}\right)$. Since $f(0)=0$, the support of $f(g(2^{s-1}x-k))$ coincides with the support of $g(2^{s-1}x-k)$. Hence,
\[
\begin{aligned}
f(g_s(x)) &= f\bigg(\sum^{2^{s}-1}_{k=0} g(2^{s-1}x-k)\bigg) \\&= \sum^{2^{s-1}-1}_{k=0}  f\big(g(2^{s-1}x-k)\big).\hspace{2.3cm}\qedhere
\end{aligned}
\]
\end{proof}

We next present a result showing that two-dimensional histogram distributions that are constant with respect to one of its dimensions, 
can be realized efficiently by deep ReLU networks.
\begin{figure}
  \includegraphics[width=\linewidth]{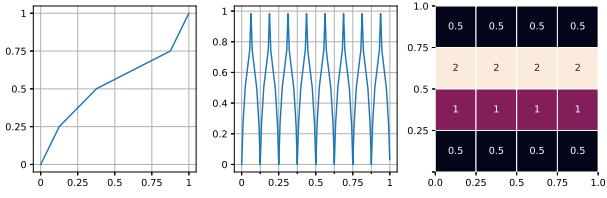}
  \caption{Generating a histogram distribution via the transport map $(x,f(g_s(x)))$. Left---the function $f(x)$, center---$f(g_4(x))$, 
  right---a heatmap of the resulting histogram distribution.}
  \label{fig:map}
\end{figure}

\begin{theorem}
\label{line-wise_theorem}
For any $p_{X,Y}(x,y) \in \mathcal{E}[0,1]^2_n$ with weights $w_{k_1,k_2}=w_{k_2}$, $k_1,k_2 \in [[0,n-1]]$, there exists a $\Phi \in \NN_{1,2}$ with connectivity $\mathcal{M}(\Phi) \leq 6n+24s+2$ and of depth $\mathcal{L}(\Phi) = s+3$, such that
\begin{equation*}
    \label{line_error_bound}
    W(\Phi \#U, p_{X,Y} ) \leq \frac{2\sqrt{2}}{2^s}.
\end{equation*}

\end{theorem}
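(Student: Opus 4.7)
The strategy is to realize $\Phi(x)=(x,f(g_s(x)))$ by a ReLU network, where $f$ is the piecewise linear map from Theorem~\ref{1d_theorem} adapted to the $y$-marginal of $p_{X,Y}$, and to bound the resulting Wasserstein error by $1/2^{s-1}\leq 2\sqrt{2}/2^s$.

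The key observation for the target distribution is that the assumption $w_{k_1,k_2}=w_{k_2}$ makes $p_{X,Y}$ independent of its first coordinate, so that $p_{X,Y}=U\otimes p_Y$ with $p_Y\in\mathcal{E}[0,1]^1_n$ having weights $w_{k_2}$ (summing to $n$ by the 2-D normalization $\sum_{\mathbf{k}}w_{\mathbf{k}}=n^2$). Theorem~\ref{1d_theorem} then supplies a piecewise linear $f:[0,1]\to[0,1]$ of the form $f(x)=\sum_{i=0}^{n-1}a_i\max(0,x-b_i)$ with $f\#U=p_Y$, realizable by a 2-layer ReLU network of connectivity $\leq 3n$.

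To realize $\Phi$ as a ReLU network I cascade the sawtooth network for $g_s$ (depth $s+1$, connectivity $11s-3$) with this two-layer network for $f$, merging the affine layers at the junction, and in parallel propagate the pair $\bigl(\rho(x),\rho(-x)\bigr)$ layer-by-layer using the identity $\rho(u)=\rho\bigl(\rho(u)-\rho(-u)\bigr)$, so that the terminal affine layer can emit $x=\rho(x)-\rho(-x)$ alongside $f(g_s(x))$. A careful layer-by-layer count---with dominant contributions $\approx 3n$ from $f$, $\approx 11s$ from the sawtooth stack, and an $O(s)$ overhead from the carry track---yields $\mathcal{L}(\Phi)=s+3$ and $\mathcal{M}(\Phi)\leq 6n+24s+2$.

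For the Wasserstein estimate, partition $[0,1]$ into the $2^{s-1}$ dyadic intervals $\Delta_k=[k/2^{s-1},(k+1)/2^{s-1}]$; on each $\Delta_k$, $g_s(x)=g(2^{s-1}x-k)$, so by the symmetry $g\#U=U$ one has $(f\circ g_s)\#U(\Delta_k)=p_Y$. Hence the restriction of $\Phi\#U$ to inputs in $\Delta_k$ has $y$-marginal proportional to $p_Y$ and lives in the stripe $\Delta_k\times[0,1]$, while $p_{X,Y}$ restricted to the same stripe is (after normalization) $U(\Delta_k)\otimes p_Y$ by the product structure. Coupling these two measures stripe-by-stripe by matching $y$-values exactly---pair a source point $(X,f(g_s(X)))$ with a target point $(X',f(g_s(X)))$ where $X'\sim U(\Delta_k)$ is drawn independently---gives zero $y$-displacement and $x$-displacement at most $|\Delta_k|=1/2^{s-1}$, so
\[
W(\Phi\#U,p_{X,Y})\ \leq\ \frac{1}{2^{s-1}}\ =\ \frac{2}{2^{s}}\ \leq\ \frac{2\sqrt{2}}{2^{s}}.
\]
The main obstacle is the constant-level bookkeeping in the network construction---arranging the parallel carry track and the $f$-cascade so that the widths, biases, and non-zero counts match the asserted depth $s+3$ and connectivity $6n+24s+2$ exactly; the analytic content of the Wasserstein step is essentially forced once the product structure $p_{X,Y}=U\otimes p_Y$ is recognized and the dyadic symmetry of $g_s$ is exploited on each $\Delta_k$.
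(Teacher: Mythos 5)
Your proposal is correct and takes essentially the same route as the paper: recognize $p_{X,Y}=U\otimes p_Y$ from the hypothesis $w_{k_1,k_2}=w_{k_2}$, take $f$ from Theorem~\ref{1d_theorem} with $f\#U=p_Y$, realize $x\mapsto(x,f(g_s(x)))$ by composing the $g_s$-network with the two-layer network for $f$ in parallel with an identity carry track (the paper handles the bookkeeping via the composition and parallelization lemmas of \cite{deep-it-2019}), and exploit the dyadic self-similarity $g_s(x)=g(2^{s-1}x-k)$ on each $\Delta_k$. Your coupling---matching the $y$-coordinate exactly within each stripe---is in fact slightly sharper than the paper's, which pays the full diameter $2^{-s+3/2}$ of the fine rectangles $c^r_{k,k_1}$ rather than just the stripe width $2^{-s+1}$, so you get $W\le 2/2^s\le 2\sqrt{2}/2^s$.
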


The transport map realized by the network in Theorem \ref{line-wise_theorem} is based on the generalized space-filling construction $f(g_s(x))$, which
has ``teeth'' in the form of $f(x)$. For an illustration see Figure \ref{fig:map}.

Now consider a general histogram distribution $p_{X,Y}(x,y)$ in
$\mathcal{E}[0,1]_n^2$. We make use of the fact that the marginals and the conditional distributions of a two-dimensional histogram distribution are (one-dimensional) histogram distributions and realize $p_{X,Y}(x,y)$ according to $p_{X,Y}(x,y)=p_{X}(x) \sum_{i=0}^{n-1} p_{Y|X}(y| x \in [i/n,(i+1)/n])$. The formal statement is as follows.

\begin{figure}
  \includegraphics[width=\linewidth]{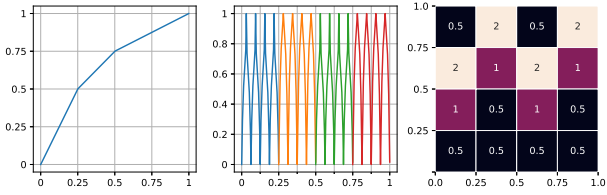}
  \caption{Generating a general $2$-D histogram distribution. Left---the function $f_1=f_3$, center---$\sum_{i=0}^{3} f_i\Big(g_3\Big(4x-i)\Big)\Big)$, right---a heatmap of the resulting histogram distribution.
  The function $f_0=f_2$ is depicted on the left in Figure \ref{fig:map}.}
  \label{fig:map_comb}
\end{figure}

\begin{theorem}
\label{2d_theorem}
For every distribution $p_{X,Y}(x,y)$ in $\mathcal{E}[0,1]_n^2$, there exists a $\Psi \in \NN_{1,2}$ with connectivity
$\mathcal{M}(\Psi)< 88(n^2+ns)$ and of depth $\mathcal{L}(\Psi) = s+5$,
such that
\begin{equation*}
\label{error_bound_2d}
   W(\Phi \#U, p_{X,Y} ) \leq \frac{2\sqrt{2}}{n2^s}.  
\end{equation*}
\end{theorem}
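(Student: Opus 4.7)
My strategy is to decompose $p_{X,Y}=p_X\cdot p_{Y|X}$, realize each factor via Theorem~\ref{1d_theorem}, and glue them together using the sawtooth space-filling device already exploited in Theorem~\ref{line-wise_theorem}. Setting $W_i:=\sum_{k_2}w_{i,k_2}$, the marginal $p_X$ is a 1D histogram on $[0,1]$ with weights $W_i/n$, and for each strip $i\in[[0,n-1]]$ the conditional $p_{Y|X=i}$ is a 1D histogram in $y$ with weights $nw_{i,k_2}/W_i$. Theorem~\ref{1d_theorem} then supplies piecewise linear maps $\phi_1$ and $f_0,\ldots,f_{n-1}$, each realized by a depth-$2$ ReLU network of connectivity $O(n)$, with $\phi_1\#U=p_X$ and $f_i\#U=p_{Y|X=i}$. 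Crucially, $f_i(0)=0$ since the smallest breakpoint of $f_i$ is $b_0=0$; this I exploit below.

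\textbf{The network and Wasserstein bound.} I would take
\[
\Psi(u)=\Bigl(\phi_1(u),\,\sum_{i=0}^{n-1} f_i\bigl(g_s(n\phi_1(u)-i)\bigr)\Bigr).
\]
Because the ReLU realization of $g$ vanishes outside $[0,1]$ and $f_i(0)=0$, the $i$-th summand is supported exactly on $\{u:\phi_1(u)\in[i/n,(i+1)/n]\}=[b_i,b_{i+1}]$ with $b_i=\sum_{j<i}W_j/n^2$, so the $n$ summands have pairwise disjoint supports. On strip $i$, the change of variable $\tilde u:=n\phi_1(u)-i$ is uniform on $[0,1]$ when $u\sim U[b_i,b_{i+1}]$, and identifies $\Psi(u)=((\tilde u+i)/n,\,f_i(g_s(\tilde u)))$ with a line-wise transport rescaled by $1/n$ in $x$ into the strip $[i/n,(i+1)/n]\times[0,1]$. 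Applying Theorem~\ref{line-wise_theorem} after accounting for this $1/n$ compression (which shrinks the tooth width from $1/2^{s-1}$ to $1/(n2^{s-1})$, and hence the bound from $2\sqrt{2}/2^s$ to $2\sqrt{2}/(n2^s)$) yields $W(\Psi\#U|_{[b_i,b_{i+1}]},\,p_{X,Y}|_{\mathrm{strip}\,i})\le 2\sqrt{2}/(n2^s)$. Since $\Psi\#U$ and $p_{X,Y}$ assign equal mass $W_i/n^2$ to each strip, the strip-wise optimal couplings concatenate into a global coupling at no additional cost, giving the claimed bound.

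\textbf{Sizing and main obstacle.} The connectivity adds up as $O(n)+O(ns)+O(n^2)$: $\phi_1$ contributes $O(n)$, the $n$ parallel copies of $g_s$ contribute $O(ns)$, and the $n$ parallel $f_i$'s contribute $O(n^2)$; bookkeeping the explicit constants of Theorem~\ref{1d_theorem} and of the $g_s$-realization gives $\mathcal{M}(\Psi)<88(n^2+ns)$. For the depth, chaining $\phi_1$ (depth $2$), the family $g_s$ (depth $s+1$), and the family $f_i$ (depth $2$) in series, plus a final aggregation layer, produces $\mathcal{L}(\Psi)=s+5$. I expect the main obstacle to be the strip-wise Wasserstein estimate: one has to verify carefully that the $2\sqrt{2}/2^s$ bound of Theorem~\ref{line-wise_theorem} rescales linearly in the $x$-extent of the strip (the binding geometric quantity being the horizontal tooth width), and that the disjointness of the intervals $[b_i,b_{i+1}]$ together with the exact matching of strip masses $b_{i+1}-b_i=W_i/n^2=p_{X,Y}(\mathrm{strip}\,i)$ justifies gluing the strip-wise couplings into a global coupling without any cross-strip transport cost.
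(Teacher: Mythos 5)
Your construction is exactly the paper's: decompose $p_{X,Y}$ into the marginal $p_X$ and conditionals $p_{Y|X\in I_i}$, realize each via Theorem~\ref{1d_theorem}, and use the transport map $u\mapsto\bigl(f_{\mathrm{marg}}(u),\sum_i f_i(g_s(nf_{\mathrm{marg}}(u)-i))\bigr)$, which coincides with your $\Psi$ up to notation; the paper derives the Wasserstein bound by directly checking that $M\#U$ and $p_{X,Y}$ assign equal mass $w_{r,k}/(2^{2s-2}n^2)$ to each box of side $n^{-1}2^{-s+1}$, whereas you invoke Theorem~\ref{line-wise_theorem} strip-by-strip after the $1/n$ horizontal compression, which is an equivalent reorganization of the same estimate. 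Your depth count ``$2 + (s+1) + 2$ plus an aggregation layer'' is off by one in its intermediate breakdown (the summation $\sum_i$ does not add a layer under the lemmas the paper cites; the composition of depths $2$, $s+1$, $2$ already yields $s+5$), but the final figures $\mathcal{L}(\Psi)=s+5$ and $\mathcal{M}(\Psi)=O(n^2+ns)$ agree with the paper.
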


The transport map realized by the network in Theorem \ref{2d_theorem} effectively implements a weighted sum of localized transport maps according to Theorem \ref{line-wise_theorem} and corresponding to the marginals $p_{Y}(y| x \in [i/n,(i+1)/n]), i=[[0,n-1]]$. For an illustration see Figure \ref{fig:map_comb}.

We remark that choosing $s \sim n$, makes the error in Theorem \ref{2d_theorem} decay exponentially in $n$ while 
the connectivity of the network is in $\mathcal{O}(n^2)$; this behavior is asymptotically optimal as the number of parameters in $\mathcal{E}[0,1]^2_n$ is of the same order. Moreover,
we note that Theorem \ref{2d_theorem} generalizes
\cite{Bailey2019}[Theorem 2.1] from uniform target distributions to arbitrary ones through the histogram approximation method and the novel space-filling transport map construction developed in the proof of Theorem~\ref{line-wise_theorem}. This construction can be interpreted as a transport operator in the sense of optimal transport theory \cite{MAL-073,villani2008optimal}, with the source distribution being one-dimensional and the target-distribution two-dimensional.

\section{Higher dimensions} \label{higher-dim}

The extension of our main result to target distributions of dimension higher than $2$ follows the same general storyline as our $2$-D results above, i.e., we approximate the target distribution by a histogram distribution, realize this histogram distribution through a transport map, and then show how this transport map can be implemented by a deep ReLU network. The transport map our extension is based on does not follow as a generalization of that for the 2-D case, but is based on an alternative idea.

\begin{theorem}
\label{dd_theorem_main_nn}
Let $d,n \in \mathbb{N}$. For every $p_{\mathbf{X}} \in \mathcal{E}[0,1]_n^d$,
there exists a $\Psi \in \NN_{1,d}$ with connectivity
$\mathcal{M}(\Psi)\leq 22\cdot2^d(n^d+n^{d-1}s)$ and of depth $\mathcal{L}(\Psi) = (d-1)(s+3)+2$,
such that
\[ W(\Psi \#U[0,1], p_{\mathbf{X}} ) \leq \frac{\sqrt{d}}{n2^s}.\]
\end{theorem}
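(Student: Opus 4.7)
The plan is to argue by induction on the output dimension $d\ge 2$, using Theorem \ref{2d_theorem} as the base case. Assuming the result for dimension $d-1$, I would build the network $\Psi\in\NN_{1,d}$ by composing an inductive network $\Psi_{d-1}\in\NN_{1,d-1}$ generating the $(d-1)$-dimensional marginal with a block that appends the $d$-th coordinate via the sawtooth-based localized construction used in Theorem \ref{line-wise_theorem}. Concretely, given $p_{\mathbf{X}}\in\mathcal{E}[0,1]_n^d$, decompose
\[
  p_{\mathbf{X}}(\mathbf{x})=p_{X_1,\ldots,X_{d-1}}(x_1,\ldots,x_{d-1})\cdot p_{X_d\mid X_1,\ldots,X_{d-1}}(x_d\mid x_1,\ldots,x_{d-1}),
\]
where the marginal lies in $\mathcal{E}[0,1]_n^{d-1}$ and, for each of the $n^{d-1}$ index tuples $(k_1,\ldots,k_{d-1})$, the conditional is a one-dimensional histogram of resolution $n$ on $[0,1]$.

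First I would invoke the induction hypothesis to obtain $\Psi_{d-1}$ of depth $(d-2)(s+3)+2$ and connectivity at most $22\cdot 2^{d-1}(n^{d-1}+n^{d-2}s)$. By the constructive nature of the inductive step, and using Theorems \ref{1d_theorem} and \ref{line-wise_theorem}, $\Psi_{d-1}$ partitions $[0,1]$ into $n^{d-1}$ sub-intervals $J_{(k_1,\ldots,k_{d-1})}$, each of length equal to the mass of the corresponding cube, and maps each such sub-interval affinely into its target cube $c_{(k_1,\ldots,k_{d-1})}\subset[0,1]^{d-1}$. To append the $d$-th coordinate, within each sub-interval I would apply an order-$s$ sawtooth as in Lemma \ref{p_saw} followed by the piecewise linear map from Theorem \ref{1d_theorem} that realizes the corresponding 1-D conditional for $X_d$; summing these $n^{d-1}$ localized contributions (whose $u$-supports are disjoint by construction) yields the $d$-th output coordinate. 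This appending block contributes $s+3$ additional layers (sawtooth: $s+1$; piecewise linear: $2$) and at most $22\cdot 2^{d-1}(n^d+n^{d-1}s)$ new non-zero parameters, giving total depth $(d-1)(s+3)+2$ and connectivity at most $22\cdot 2^d(n^d+n^{d-1}s)$, matching the claim.

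For the Wasserstein bound, the first $d-1$ output coordinates are produced exactly (the marginal is itself a histogram generated via pushforward of a piecewise linear map as in Theorem \ref{pwl2histogram}), and each of the $d-1$ sawtooth stages deployed across the inductive construction contributes a horizontal displacement of at most $1/(n2^s)$ along its respective coordinate axis, by the same argument as in Theorem \ref{line-wise_theorem}. Combining via the triangle inequality and the Euclidean norm used in the definition of Wasserstein distance bounds the total displacement by $\sqrt{d-1}/(n2^s)\le \sqrt{d}/(n2^s)$, as desired.

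The main obstacle I anticipate is verifying the constructive invariant that $\Psi_{d-1}$ acts affinely on $n^{d-1}$ disjoint sub-intervals of $[0,1]$, with predictable endpoints and slopes, since this invariant must be maintained through every step of the induction for the sawtooth-based appending step to localize correctly. A secondary technical point is tight bookkeeping of the connectivity: the factor $2^d$ suggests that each inductive step roughly doubles some architectural overhead, likely identity carry-through channels for the already-produced coordinates that must survive the $s+3$ new layers without a ReLU clipping them; this doubling must be made precise to meet the stated bound. Finally, the excerpt's remark that the transport map is based on an \emph{alternative idea} not directly generalizing the 2-D case suggests that the actual construction may combine the $d-1$ sawtooth stages in a single globally designed pass rather than strictly sequential inductive composition, in which case my plan would need to be reorganized with carefully interleaved sub-networks, though the Wasserstein bookkeeping above would remain applicable.
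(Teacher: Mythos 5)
The paper does not actually supply a complete proof of this theorem. It introduces the ingredients (the localized sawtooths $g_s^\Delta$, the level-shift variant $h_s^\Delta$, and the snake ordering of Definition~\ref{snake_ordering}) and then writes out only the transport map for the special case $d=2$, $n=2^k$, explicitly stating that ``the transport map our extension is based on does not follow as a generalization of that for the 2-D case, but is based on an alternative idea.'' Your proposal is, by design, exactly such a generalization: induct on $d$, using Theorem~\ref{2d_theorem} as base case and appending one Theorem~\ref{line-wise_theorem}-style sawtooth stage per dimension. You do flag this tension yourself at the end, which is good judgment. Still, two concrete gaps prevent your plan, as written, from closing.

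First, the inductive invariant you identify as the ``main obstacle'' is genuinely false for the base case you propose. The transport map of Theorem~\ref{2d_theorem} does \emph{not} partition $[0,1]$ into $n^2$ contiguous sub-intervals that map affinely into the $n^2$ cubes: within each $P_r$ the second coordinate is produced by an order-$s$ sawtooth composed with $f_r$, so the $u$-preimage of a fixed cube $c_{r,k}$ is a union of roughly $2^{s-1}$ scattered pieces of $P_r$, not an interval. Consequently the appended $d$-th stage cannot be localized cube-by-cube with a single $g_s^\Delta$ per cube, which is exactly what your connectivity count assumes. The snake ordering in the paper exists precisely to guarantee that each cube's preimage \emph{is} a single contiguous sub-interval $\Delta_{S_{\mathbf k}}$, so that the per-cube sawtooths $g_s^\Delta$ and level-shifts $h_s^\Delta$ can be pasted together continuously. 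Without rebuilding the base case to have this contiguity property, the inductive step does not have the structure it needs.

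Second, the Wasserstein accounting does not match even for $d=2$. Theorem~\ref{2d_theorem}, which you use as the base case, gives $W \leq 2\sqrt{2}/(n2^s)$, strictly larger than the claimed $\sqrt{2}/(n2^s)$, so induction starting from it cannot reproduce the stated bound. The claimed per-axis displacement of $1/(n2^s)$ is also not what Theorem~\ref{2d_theorem}'s proof controls: the cells $c^{r,r_1}_{k,k_1}$ there have side $n^{-1}2^{-s+1}=2/(n2^s)$. Moreover, the statement that ``the first $d-1$ output coordinates are produced exactly'' conflates exactness of the one-dimensional marginals (which does hold) with exactness of the joint law of $(X_1,\dots,X_{d-1})$, which carries its own Wasserstein error under the inductive hypothesis and cannot be dropped. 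To reach $\sqrt{d}/(n2^s)$, all $d$ axes must be controlled at scale $1/(n2^s)$ simultaneously, which is what the single-pass snake construction is engineered to deliver but a naive stacking of Theorem~\ref{2d_theorem}-style stages does not.
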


The transport map underlying this result is based on the following functions. 
Let $s\in \mathbb{N}$, $\Delta=[a,b] \subset [0,1]$, set $\widetilde{b} = a + \frac{2^s(b-a)}{1+2^s}$, and define
\begin{equation*}
        \label{g_int}
        g_{s}^{\Delta}(x):= \frac{1}{n}g_s\left(\frac{x-a}{b-a}\right),
    \end{equation*}
    
 \begin{equation*}
    \label{gr_int}
    h_{s}^{\Delta}(x):= g_{s}^{\Delta}\left(\frac{x-a}{\widetilde{b}-a}\right) + \frac{1}{n(b - \widetilde{b})}(\rho(x-\widetilde{b}) - \rho(x-b)).
    \end{equation*}
\begin{center}
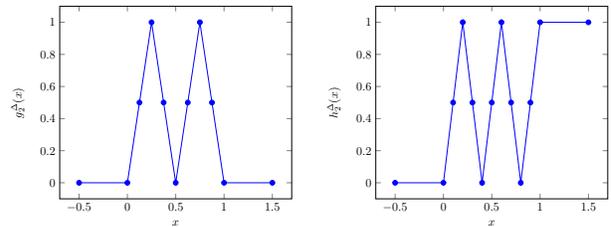

\begin{tabular}{ c c }
 
 \begin{tikzpicture}[scale=0.45]
	\begin{axis}[
		xlabel=$x$,ylabel=$g_{2}^{\Delta}(x)$]
	\addplot[color=blue,mark=*] coordinates {
	    (-0.5,0)
	    (0,0)
		(0.125,0.5)
		(0.25,1)
		(0.375,0.5)
		(0.5,0)
		(0.625,0.5)
		(0.75,1)
		(0.875,0.5)
		(1,0)
		(1.5,0)
		
	};
	\end{axis}%
\end{tikzpicture} & \begin{tikzpicture}[scale=0.45]
	\begin{axis}[
		xlabel=$x$,ylabel=$h_{2}^{\Delta}(x)$]

	\addplot[color=blue,mark=*] coordinates {
	    (-0.5,0)
		(0,0)
		(0.1,0.5)
		(0.2,1)
		(0.3,0.5)
		(0.4,0)
		(0.5,0.5)
		(0.6,1)
		(0.7,0.5)
		(0.8,0)
		(0.9,0.5)
		(1,1)
		(1.5,1)
	};
	\end{axis}%
\end{tikzpicture}\\
\end{tabular}
\captionof{figure}{Plots of $g_{s}^{\Delta}(x)$ (left) and $h_{s}^{\Delta}(x)$ (right) with $n=1,a=0,b=1,s=2$.}
\label{g_int_visualized}
\end{center}

Rather than providing the full details, which are notationally very cumbersome, for illustration purposes, we specify the
transport map for the special case $d=2$ and $n = 2^k$, for some $k\in \N$. 

Let $p_{X,Y}(x,y) \in \mathcal{E}[0,1]^2_n$ have weights $w_{\mathbf{k}}$ and denote the piecewise linear function corresponding to the marginal histogram distribution $p_X$ according to Theorem \ref{1d_theorem} by
$f_{\text{marg}}$. Note that the marginal histogram has weights $w_k=\frac{1}{n}\sum_{i=0}^{n-1} w_{k,i}$. Let $\Delta_{S_{\mathbf{k}}}:=[\frac{1}{n^2} \sum_{\mathbf{y}:S_{\mathbf{y}} < S_{\mathbf{k}}} \frac{w_{\mathbf{y}}}{w_{y_1}}, \frac{1}{n^2}\sum_{\mathbf{y}:S_{\mathbf{y}} \leq S_{\mathbf{k}}} \frac{w_{\mathbf{y}}}{w_{y_1}}]$, where the order relation $S_{\mathbf{y}} < S_{\mathbf{k}}$ is according to the following definition.

\begin{definition}[Snake ordering]
\label{snake_ordering}
Let $\mathbf{k},\mathbf{k}'\in [[0,n-1]]^2$, with $\mathbf{k}=(x_1, x_2),\mathbf{k}'=(x'_1, x'_2)$ be distinct.
The snake ordering is defined as follows
\begin{itemize}
    \item if $x_2<x'_2$, then $\mathbf{k}<\mathbf{k}'$;
    \item if $x_2=x'_2$ and $x_2\in 2\mathbb{N}_0$, then $\mathbf{k}<\mathbf{k}'$ if $x_1<x'_1$ according to the snake ordering;
    \item if $x_2=x'_2$ and $x_2\in (2\mathbb{N}_0 + 1)$, then $\mathbf{k}<\mathbf{k}'$ if $x_1>x'_1$ according to the snake ordering.
\end{itemize}
\end{definition}

Finally, the transport map is given by
\[x \rightarrow \bigg( f_{\text{marg}}(g_k(x)) , \sum_{j=1}^{n} \Big(h_s^{\Delta_{jn}}(x) + \sum_{i=1}^{n-1} g_s^{\Delta_{i+jn}}(x) \Big) \bigg). \]
For a corresponding illustration, see Figure \ref{fig:second_method}. 

\begin{figure}
  \includegraphics[width=\linewidth]{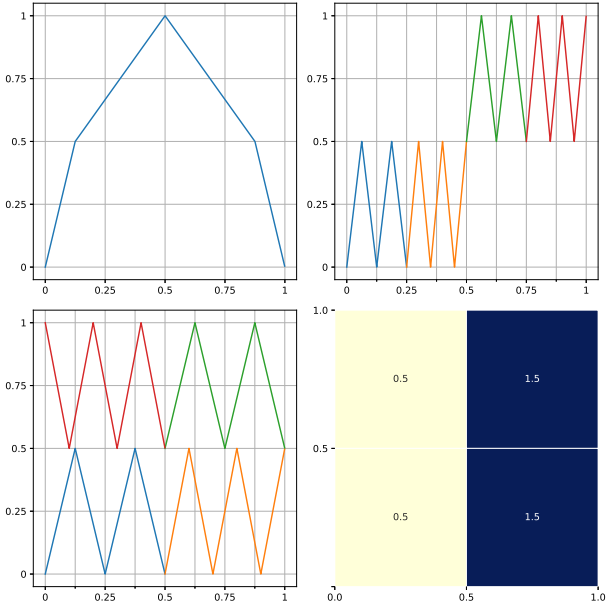}
  \caption{Generating the $2$-D histogram distribution using the alternative method. Top-left---the function $f_{\text{marg}}(g(x))$, top-right---the function $z(x)=g_2^{[0,1/4]}(x) + h_2^{[1/4,1/2]}(x) + g_2^{[1/2,3/4]}(x) + g_2^{[3/4,1]}(x)$, bottom-left---plot of the map $x \rightarrow (f_{\text{marg}}(g(x)), z(x))$, bottom-right---heatmap of the generated distribution.}
  \label{fig:second_method}
\end{figure}

\section{Conclusion}

The results in this paper show that every $d$-dimensional Lipschitz-continuous target distribution (under mild conditions on its pdf) can be generated through deep ReLU networks out of a one-dimensional uniform input distribution. What is more, this is possible without incurring a cost---in terms of approximation error measured in Wasserstein-distance---relative to generating the $d$-dimensional target distribution from $d$ independent random variables. This is accomplished through a two-stage approach, first generating a histogram distribution 
and then showing that increasing the histogram resolution drives the approximation error to zero while
the corresponding network connectivity scales no faster than the number of parameters in the class of histogram distributions considered. Concretely, this means that the generating network we devise has minimum possible connectivity scaling. We finally note that all the constructions in this paper employ 
histogram distributions of uniform tile size. As deep ReLU networks can generate histogram distributions of general tile sizes, it is likely that the constants in the bounds on the connectivity of the generating networks can be improved.

\section{Omitted proofs}

\subsection{Proof of Theorem \ref{line-wise_theorem}}
 \begin{proof}
  Let $p_X(x)$ be the marginal corresponding to $p_{X,Y}(x,y)$ and note that $p_{X}(x)$ is in $\mathcal{E}[0,1]^1_n$ and has weights $w_k$, $k \in [[0,n-1]]$. Define the map $M$ as follows $M: [0,1] \rightarrow [0,1]^2$,
  \[M:x \rightarrow (x, f(g_s(x))),\]
  where $g_s$ is an $s$-order sawtooth function according to Equation \ref{g_s_def} and $f(x)$ is defined according to Theorem \ref{1d_theorem} such that $f\#U = p_X(x)$. Fix $s\in \mathbb{N}$, take an arbitrary $r \in [[0,2^{s-1}-1]]$, and consider $f(g_s(x))$ on the interval $P_r = [\frac{r}{2^{s-1}}, \frac{r+1}{2^{s-1}}]$. By Lemma \ref{p_saw}, $f(g_s(x)) = f(g(2^{s-1}x-r)), \forall x \in P_r$. Now, let $c^{r}_{k,k_1} = [r 2^{-s+1}, (r+1)2^{-s+1}] \times [k/n + k_1 2^{-s+1}/n, k/n + (k_1+1)2^{-s+1}/n]$,
  $r, k_1 \in [[0,2^{s-1}-1]],k \in [[0,n-1]]$.
By Lemma \ref{p_saw0}, we have for all $k_1,k$,
\begin{equation}
\label{eq:line}
(M\#U(P_r)) (x\in c^{r}_{k,k_1}) = p_{X,Y}((x,y) \in c^{r}_{k,k_1}).
\end{equation}
Since $p_{X,Y}((x,y) \in c^{r}_{k,k_1}) = \frac{w_k}{n^2 2^{s-1}}$, for all $r \in [[0,2^{s-1}-1]]$, independently of $r$, by Lemma \ref{p_saw}, Equation \ref{eq:line} holds for all intervals $P_r$, $r \in [[0,2^{s-1}-1]]$. We have hence established that for all $r,k,k_1$, the map $M$ distributes probability mass to each of the rectangles $c^{r}_{k,k_1}$
according to $p_{X,Y}((x,y) \in c^{r}_{k,k_1})$. We refer to Figure \ref{fig:map} for a visualization of the transport map $M$.
Since $|x-y|\leq 2^{-s+1} \sqrt{1+\frac{1}{n}} \leq 2^{-s+3/2}$ for any two points in a rectangle of dimensions $(2^{-s+1} \times n^{-1}2^{-s+1})$, there exists a coupling $\pi$ that,
in each $c^{r}_{k,k_1}$,
associates points between $p_{X,Y}(x,y)$ and $M \#U$ owing to which we have
  \[W(M \#U, p_{X,Y}(x,y)) \leq \int_{[0,1]^2} 2^{-s+3/2} d(x,y) = \frac{2\sqrt{2}}{2^s}.\]
  It remains to show how the transport map
\[x \rightarrow (x, f(g_s(x))) \]
can be realized through a ReLU network.

We start by noting that the function $f(x) = \sum_{i=1}^n a_i \max\big(0, x - b_i\big)$ can be realized through the network 
$\Phi_1 \in \NN_{1,1}$ with $\Phi_1(x) = \sum_{i=1}^n a_i \rho(x - b_i)$, $\mathcal{M}(\Phi_1)\leq 3n$, and $\mathcal{L}(\Phi_1)=2$. The network $\Psi^s_g(x)$ realizing $g_s(x)$ is in $\NN_{1,1}$ with $\mathcal{M}(\Psi^s_g) = 11s-3$ and $\mathcal{L}(\Psi^s_g)=s+1$. It follows by Lemma II.3 in \cite{deep-it-2019} that $\Psi^f_s = \Phi_1(\Psi^s_g)$ is in $\NN_{1,1}$, with $\mathcal{M}(\Psi^f_s) \leq 22s+6n-6$ and $\mathcal{L}(\Psi^f_s) = s+3$.  The network $\Phi_2(x) = \rho(x) - \rho(-x) = x$ is in $\NN_{1,1}$ with $\mathcal{M}(\Phi_2) = 4$ and $\mathcal{L}(\Phi_2) = 2$. By Lemma II.4 in \cite{deep-it-2019}, there exists a network $\tilde{\Phi}_2(x)=\Phi_2(x)$ with $\mathcal{M}(\tilde{\Phi}_2) \leq 2s+8$ and $\mathcal{L}(\tilde{\Phi}_2) = s+3$. Finally, parallelizing $\tilde{\Phi}_2$ and $\Psi^f_s$ using Lemma A.7 in \cite{deep-it-2019}, we obtain the network $\Psi = (\tilde{\Phi}_2, \Psi^f_s)$, $\Psi \in \NN_{1,2}$, with $\mathcal{M}(\Psi)\leq 6n+24s+2$ and $\mathcal{L}(\Psi) = s+3$, implementing the desired transport map $x \rightarrow (x, f(g_s(x)))$.
\end{proof}

\subsection{Proof of Theorem \ref{2d_theorem}}
\begin{proof}
Let $I_i=[i/n,(i+1)/n]$ for $i \in [[0,n-1]]$ and let the weights of $p_{X,Y}(x,y)$ be given by $w_{k_1,k_2}$. Then, for every $i \in [[0,n-1]]$, consider the distribution $p_{Y}^i(y)= p_{Y}(y| x \in [i/n,(i+1)/n]) \in \mathcal{E}[0,1]^1_n$ with weights $w^i_{k} = \frac{nw_{i,k}}{\sum_{j=0}^{n-1}w_{j,k}}$, for $k \in [[0,n-1]]$, and let $f_i(x)$ be the corresponding piecewise linear function according to Theorem \ref{1d_theorem} such that $f_i\#U= p_{Y}^i$.
It follows from Definition \ref{d_dim_dist}, by integrating over $y$, that the marginal $p_X(x) \in \mathcal{E}[0,1]^1_n$ has weights $w_i = \sum_{j=0}^{n-1} w_{i,j}/n$, and we denote the piecewise linear function generating it according to Theorem \ref{1d_theorem} as $f_{\text{marg}}(x)$, i.e.,
$f_{\text{marg}}\#U=p_X$.
Take an arbitrary $r \in [[0,n-1]]$, fix $s \in \N$,  and consider the following transport map
\begin{equation}
    \label{2d_map}
    M: x \rightarrow \Bigg(f_{\text{marg}}(x), \sum_{i=0}^{n-1} f_i(g_s(nf_{\text{marg}}(x)-i)) \Bigg)
\end{equation}
on the interval $P_r:=[ \frac{1}{n} \sum_{j=0}^{r-1} w_j,  \frac{1}{n} \sum_{j=0}^{r} w_j]$. For $x\in P_r$, $f_{\text{marg}}(x) \in [r/n, (r+1)/n]$ and by Theorem \ref{1d_theorem} its explicit form is given by $f_{\text{marg}}(x) = \frac{x}{w_r} - \frac{\sum_{j=0}^{r-1}w_j}{nw_r} + \frac{r}{n}$. 
Therefore, $(nf_{\text{marg}}(x)-i) \in [r-i,r-i+1]$ and 
$f_i(g_s(nf_{\text{marg}}(x)-i)) = 0$, when $i\neq r$, as $g_s(x)=0, \forall x \notin [0,1]$. 
For $x\in P_r$, the transport map in Equation \ref{2d_map} hence becomes \[x \rightarrow \bigg(\frac{x}{w_r} - \frac{\sum_{j=0}^{r-1}w_j}{nw_r} + \frac{r}{n},p_r\Big(g_s\Big(\frac{nx- \sum_{j=0}^{r-1}w_j}{w_r}\Big)\Big)\bigg).\]
Now, let $c^{r,r_1}_{k,k_1} = [r/n + r_1 2^{-s+1}/n, r/n + (r_1+1)2^{-s+1}/n] \times [k/n + k_1 2^{-s+1}/n, k/n +
(k_1+1)2^{-s+1}/n]$, $r_1, k_1 \in [[0,2^{s-1}-1]],k \in [[0,n-1]]$. 
The square $c^{r,r_1}_{k,k_1}$ has area 
$\frac{2^{-2s+2}}{n^2}$
and $p_{X,Y}((x,y) \in c^{r,r_1}_{k,k_1}) = \frac{w_{r,k}}{2^{2s-2}n^2}$. Combining Lemmas \ref{p_saw0} and \ref{p_saw}, we obtain that for all $r_1,k_1,k$,
\begin{equation*}
\begin{aligned}
&(M\#U(P_r)) (x\in c^{r,r_1}_{k,k_1}) = \frac{w_r}{2^{s-1}n} \cdot \frac{w^r_k}{2^{s-1}n} \\
&= \frac{\sum_{j=0}^{n-1} w_{r,j}}{2^{2s-2}n^3} \cdot  \frac{nw_{r,k}}{\sum_{j=0}^{n-1}w_{r,j}} = \frac{w_{r,k}}{2^{2s-2}n^2}\\ &= p_{X,Y}((x,y) \in c^{r,r_1}_{k,k_1}).
\end{aligned}
\end{equation*} 
In summary, we found that $(M\#U(P_r)) (x \in c^{r,r_1}_{k,k_1}) = p_{X,Y}((x,y) \in c^{r,r_1}_{k,k_1})$, for arbitrary $r \in [[0,n-1]]$.
This establishes that for all $r,r_1,k,k_1$, the map $M$ distributes 
probability mass to each of the squares $c^{r,r_1}_{k,k_1}$ of area $\frac{2^{-2s+2}}{n^2}$ according to $p_{X,Y}((x,y) \in c^{r,r_1}_{k,k_1})$. We refer to Figure \ref{fig:map_comb} for a visualization of the corresponding transport map $M$.

Since $|x-y|\leq 2^{-s+3/2}/n$ for any two points in a box of
size $(n^{-1}2^{-s+1} \times n^{-1}2^{-s+1})$,
it follows that there exists a coupling $\pi$ between $p_{X,Y}(x,y)$ and $M\#U$ owing to which
  \[W(M \#U, p_{X,Y}(x,y)) \leq \frac{2\sqrt{2}}{n2^s}.\]

It remains to devise a ReLU network realizing the transport map in Equation \ref{2d_map}.

The functions $f_{i}(x)$ can be implemented
through networks $\Phi^i_1 \in \NN_{1,1}$ with
$\Phi^i_1(x) = \sum_{\ell=1}^n a_{\ell} \rho(x - b_{\ell})$, $\mathcal{M}(\Phi^i_1) \leq 3n$, and $\mathcal{L}(\Phi^i_1) =2$. 
We then note that $f_{\text{marg}}(x)$ can be realized through a network $\Phi_2 \in \NN_{1,1}$ with
$\mathcal{M}(\Phi_2) \leq 3n$ and $\mathcal{L}(\Phi_2) = 2$. The networks $\Phi^i_2(x)$ implementing $nf_{\text{marg}}(x)-i$ are in $\NN_{1,1}$ and have $\mathcal{M}(\Phi^i_2) \leq 4n$, $\mathcal{L}(\Phi^i_2) = 2$, and the network $\Psi^s_g(x) = g_s(x)$ is in $\NN_{1,1}$ with $\mathcal{M}(\Psi^s_g) = 11s-3$ and $\mathcal{L}(\Psi^s_g) =s+1$. By Lemma II.3 in \cite{deep-it-2019}, it follows that the networks $\Psi^i_s = \Phi^i_1(\Psi^s_g(\Phi^i_2))$ are in $\NN_{1,1}$ with $\mathcal{M}(\Psi^i_s) \leq 20n+44s-12$ and $\mathcal{L}(\Psi^i_s) = s+5$. By Lemma II.6 in \cite{deep-it-2019}, the network $\Psi^{\Sigma}=\sum_{i=0}^{n-1} \Psi^i_s$ realizing $\sum_{i=0}^{n-1} f_i\Big(g_s\Big(nf_{\text{marg}}(x)-i\Big)\Big)$ is in $\NN_{1,1}$ with $\mathcal{M}(\Psi^{\Sigma}) \leq 20n^2+44ns-12$ and $\mathcal{L}(\Psi^{\Sigma}) = s+5$. Thanks to Lemma II.4 in \cite{deep-it-2019}, there exists a network $\tilde{\Phi}_2(x)=\Phi_2(x)$ in $\NN_{1,1}$ with $\mathcal{M}(\tilde{\Phi}_2) \leq 4n+2s+6$ and $\mathcal{L}(\tilde{\Phi}_2) = s+5$. Parallelizing $\tilde{\Phi}_2$ and $\Psi^{\Sigma}$ using Lemma A.7 in \cite{deep-it-2019}, we obtain the network $\Psi = (\tilde{\Phi}_2, \Psi^{\Sigma})$, $\Psi \in \NN_{1,2}$, with $\mathcal{M}(\Psi) \leq 20n^2+44ns+4n+2s-6<88(n^2+ns)$ and $\mathcal{L}(\Psi) = s+5$, and realizing the transport map \[x \rightarrow \Bigg(f_{\text{marg}}(x), \sum_{i=0}^{n-1} f_i\Big(g_s\Big(nf_{\text{marg}}(x)-i)\Big)\Big) \Bigg).\qedhere \]
\end{proof}

\bibliography{references}
\bibliographystyle{icml2020}

\newpage

\end{document}